\def\eqref#1{equation~\ref{#1}}
\def\1{\bm{1}}
\DeclareMathAlphabet{\mathsfit}{\encodingdefault}{\sfdefault}{m}{sl}
\SetMathAlphabet{\mathsfit}{bold}{\encodingdefault}{\sfdefault}{bx}{n}
\newcommand{\E}{\mathbb{E}}
\DeclareMathOperator*{\argmax}{arg\,max}
\DeclareMathOperator*{\kl}{KL}
\DeclareMathOperator*{\tvd}{TVD}
\DeclareMathOperator*{\bigv}{\!\bigm\vert\!}
\DeclareMathOperator{\X}{\mathcal{X}}
\DeclareMathOperator{\M}{\mathcal{M}}
\DeclareMathOperator{\sm}{\textrm{small}}
\newcommand{\tagaligneq}{\refstepcounter{equation}\tag{\theequation}}
\newtheorem{proposition}{Proposition}
\newtheorem{theorem}{Theorem}
\newtheorem{definition}{Definition}
\def\dashedbox{\tikz\node[draw=black,dashed] {\phantom{}};}
\newenvironment{proofoutline}[1][\proofoutlinename]{\par
  \normalfont
  \topsep6\p@\@plus6\p@ \trivlist
    \item[\,\textbf{
    #1}]
}{%
  \hfill$\dashedbox$ \endtrivlist
}
\newcommand{\proofoutlinename}{Proof Outline}
\def\dottedbox{\tikz\node[draw=black,dotted] {\phantom{}};}
\newcommand{\proofideaname}{Proof idea}
\patchcmd{\NAT@test}{\else \NAT@nm}{\else \NAT@nmfmt{\NAT@nm}}{}{}
\DeclareRobustCommand\citepos
  \let\NAT@nmfmt\NAT@posfmt
\let\NAT@ctype\z@\NAT@partrue
\let\NAT@orig@nmfmt\NAT@nmfmt
\def\NAT@posfmt#1{\NAT@orig@nmfmt{#1's}}
\newcommand{\xxcomment}[4]{}
\title{RL, but don't do anything I wouldn't do}
\author{%
  Michael K. Cohen\\
  UC Berkeley\\
  \texttt{mkcohen@berkeley.edu} \\
  \And
  Marcus Hutter \\
  Google DeepMind \\
  \texttt{hutter1.net} \\
  \And
  \AND
  Yoshua Bengio \\
  Universit\'e de Montr\'eal \\
  \texttt{yoshua.bengio@mila.quebec} \\
  \And
  Stuart Russell \\
  UC Berkeley\\
  \texttt{russell@berkeley.edu} \\
}
\begin{document}

\maketitle

\begin{abstract}
  In reinforcement learning, if the agent's reward differs from the designers' true utility, even only rarely, the state distribution resulting from the agent's policy can be very bad, in theory and in practice. When RL policies would devolve into undesired behavior, a common countermeasure is KL regularization to a trusted policy (``Don't do anything I wouldn't do''). All current cutting-edge language models are RL agents that are KL-regularized to a ``base policy'' that is purely predictive. Unfortunately, we demonstrate that when this base policy is a Bayesian predictive model of a trusted policy, the KL constraint is no longer reliable for controlling the behavior of an advanced RL agent. We demonstrate this theoretically using algorithmic information theory, and while systems today are too weak to exhibit this theorized failure precisely, we RL-finetune a language model and find evidence that our formal results are plausibly relevant in practice. We also propose a theoretical alternative that avoids this problem by replacing the ``Don't do anything I wouldn't do'' principle with ``Don't do anything I mightn't do''.
\end{abstract}

\section{Introduction}

Agents optimizing their objective in a way not intended by designers could be amusing, annoying, insidious, or disastrous. Amusingly, RL researchers attempted to get a simulated humanoid to walk, but the reward resulted in crazy locomotion \citep{lee2021pebble}. Annoyingly, maximizing a simulated-environment's reward can produce a policy that would achieve little real-world-reward by exploiting errors in the simulation \citep{mishra2017prediction,baker2019emergent}. Insidiously, artificial agents selecting links to maximize click-through on social media sites have succeeded, but also affecting people in ways designers never sought to \citep{chan2023harms}. For a much longer list of such failures occurring ``in the wild'', see \citep{krakovna_2018}. Finally, sufficiently capable reinforcement learners would likely recognize an incentive to escape human oversight, intervene in the protocol determining their reward, and use force to ensure they can retain control of their reward, subject to such an outcome being possible from the agent's action space, and several other assumptions laid out by \citet{cohen2022advanced}.

Indeed, several sources suggest that extremely successful reward-maximization is \textit{itself} a sign of bad outcomes for humanity. \citet{zhuang2020consequences} demonstrate that in a resource-constrained world, optimizing the world's state to maximize a function of \textit{some} features would, in plausible settings, be arbitrarily bad with respect to a utility function that also cares about \textit{unincluded} features. \citet{turner2021optimal} develop a formal model of ``power''---being able to accomplish a randomly sampled goal---and find that (reward-)optimal policies tend to seek power. And \citet{cohen2022advanced} observe that any behavior that ensures that long-term reward is nearly-certainly-maximal must include extensive control over threats to its physical integrity, including threats from humans.

An appealing and popular proposal to avoid such outcomes is to constrain the agent to follow a policy that is not too dissimilar to a more familiar ``base policy''. This is the approach taken when RL-finetuning large language models (LLMs). This class of approaches limits the upside of RL, since it forgoes optimal policies, but it is a reasonable attempt to avoid catastrophic policies. The KL divergence, in particular $\kl(\textrm{proposed policy} \| \textrm{base policy})$, enforces proximity in a robust, ``safety-conscious'' way: if $\textrm{basepolicy}(\textrm{action}) <\!< 1$ while $\textrm{proposedpolicy}(\textrm{action}) \ \not \!\!<\!< 1$, the KL penalty is high, even while $L_p$ norms can be small. 
For any very bad outcomes that are unlikely under the base policy, this method ensures they remain very unlikely. However, if we ensure that $\kl(\textrm{proposed policy} \| \textrm{base policy})$ is small, but the base policy only \textit{approximates} a trusted policy, to what extent can we be confident that $\kl(\textrm{proposed policy} \| \textrm{trusted policy})$ is small? When the base policy is a Bayesian predictive model of the trusted policy, the answer shown here is: we cannot be confident that $\kl(\textrm{proposed policy} \| \textrm{trusted policy})$ is small, which makes the KL-constraint less comforting. (Note that a Bayesian imitative base policy can only be counted on to make $\kl(\textrm{trusted policy} \| \textrm{Bayesian base policy})$ small).


Worse,
in the formalism we study, we find that if one attempts to use KL-regularization to prevent an RL agent from achieving near-maximal reward (in light of the concerns above), and the base policy is a Bayesian imitation of a trusted policy, a fairly tight KL threshold is required, and as the amount of training data for the Bayesian imitator grows, the relevant threshold can only increase extremely slowly. The reason for the limited effectiveness of KL regularization is \textbf{(1)} a Bayesian imitator asked to act in novel settings must be humble about its predictions; for many actions that the demonstrator (i.e. the trusted policy) would in fact never take, the imitator (i.e. the base policy) must assign meaningful credence to that action, because it doesn't know enough to rule it out. Then \textbf{(2)} the RL agent can exploit or amplify this credence. Formalizing Occam's razor with algorithmic information theory, we have \textbf{(3)} nearly-reward-maximizing policies have a short description length (so they are ``simple''), and \textbf{(4)} open-minded Bayesian imitation learners should be \textit{especially} reluctant to rule out \textit{simple} behaviors from the demonstrator in novel settings. In light of the results from \citet{zhuang2020consequences}, \citet{turner2021optimal}, and \citet{cohen2022advanced}, preventing the RL agent from achieving near-maximal reward is, in many settings, a bare minimum requirement for safety-focused regularization, and a KL constraint would struggle to do so.

\citet{Sutskever_2018,Sutskever_2023} argues that neural networks are able to generalize well because of the sense in which they approximate the algorithmic-information-theoretic inductive bias in favor of short programs. Since it is not a given that results from algorithmic information theory apply in practice, we verify empirically that a nearly-state-of-the-art predictive system (Mixtral-8x7B-base-model \citep{jiang2024mixtral}) is reluctant to rule out simple behaviors, and an RL agent regularized to this predictive system exploits this fact, as our formal results predict. The result is not catastrophic, but it is bad. Note these empirical results are consistent with point \textbf{(3)} above failing to apply in practice, but they do affirm that the rest of the argument is forceful in practice.

Finally, we identify an alternative to Bayesian prediction/imitation that avoids this problem; \citepos{cohen2022fully} imitator asks for help when uncertain and carries useful formal bounds. We show that using this form of imitation learning as a base policy would in theory avoid the problems we identify in this paper. \citepos{cohen2022fully} active imitator, like fully Bayesian imitation, is intractable and requires approximation, so we currently lack the tools to evaluate this proposal empirically.

\section{Related work}

The most prominent example of $\kl$-regularization to an approximation of a (somewhat) trusted policy is surely ChatGPT, inspired by earlier work \citep{ouyang2022training,stiennon2020learning,bai2022training}. Other recent examples include \citet{jaques2017sequence,jaques2019way}, \citet{ziegler2019fine}, \citet{vieillard2020leverage}, \citet{yang2021accelerating}, \citet{korbak2022rl}, \citet{perez2022red}, \citet{gao2023scaling}, and \citet{moskovitz2023confronting}. A closely related approach called quantilization has been investigated by \citet{taylor2016quantilizers}, \cite{Hutter:17corruptrl}, and \citet{carey2019how}. KL regularization to a decent policy has also been used for stable and efficient policy optimization \citep{schulman2017proximal,schmitt2018kickstarting}.

Algorithmic information theory began with \citet{solomonoff1960preliminary}, who formalized a powerful notion of simplicity based on program-length and developed a method for prediction using that inductive bias. In an article entitled, ``A theory of program size formally identical to information theory'', \citet{chaitin1975theory} examined the connection between program-length and information. \citepos{li2008introduction} textbook presents the major results of the field. \citet{Hutter:04uaibook} and \citet{Hutter:24uaibook2} developed a theory of how to apply such reasoning to the problem of sequential decision-making. \citet{Hutter:24trainsol} train a neural network to learn a program-length ``bias'' for a meta-learning setting.

Ultimately, we propose a formal scheme for doing KL regularization to an imitative policy which asks for help under epistemic uncertainty, and this allows us to inherit the formal results of \citet{cohen2022fully}. The related work section there goes into some detail about how different researchers have studied asking for help, including how setups and assumptions differ. See especially \citepos{zhang2017query}
work on driving,
as well as \citet{brown2018risk,brown2020bayesian} and \citet{menda2019ensembledagger}.

Closest to our work in studying the relation between KL divergence to a base policy and ``over-optimization'' is \citet{gao2023scaling}. They design a ``real'' reward function, and a simpler ``proxy'' reward function, which are very similar on the state distribution induced by a base policy. After optimizing for the proxy reward function (sometimes with KL regularization to the base policy), they use the KL divergence to the base policy to measure how much ``optimization'' has occurred. And they study how ``real'' reward depends on the extent of optimization---roughly quadratically, with a negative leading coefficient. Our work provides one explanation for \textit{why} we should expect such unusual policies with high proxy reward and low real reward, even when the KL divergence to the base policy is only moderate.

\section{Notation and preliminaries}

We begin with a formalism for an imitative base policy that has an infinite ``context window'' and a lifetime that is one long episode, rather than a lifetime broken up into multiple episodes with presumed-identical dynamics. This is the most general setting for an imitative base policy. We simply have an infinite sequence of actions and observations $a_1 o_1 a_2 o_2 \dots$, and predictive ``autoregressive'' models which give conditional distributions of the form $\texttt{model}(\textrm{next action} | \textrm{all previous actions and observations})$.

We formalize sequential prediction as follows. Let $\X$ be a finite alphabet, and let $\X^*$ be the set of finite strings from the alphabet $\X$, so $\X^* = \bigcup_{i=0}^\infty \X^i$. Let $x_{<t}$ be an element of $\X^{t-1}$, and let $x_{t_1:t_2}$ be an element of $\X^{t_2-t_1+1}$. Let $\nu : \X^* \times \X \to [0, 1]$ be a (predictive) probability semi-distribution, satisfying the property that for any $x_{<t} \in \X^*$, $\sum_{x \in \X} \nu(x | x_{<t}) \leq 1$. If one prefers to think about probability distributions, consider the associated probability distribution over $\X \cup \{\emptyset\}$, with $\nu(\emptyset | x_{<t}) = 1 - \sum_{x \in \X} \nu(x | x_{<t})$. So $\nu$ gives a conditional distribution over the next character given the past characters, if there is a next character at all. Let $\nu(x_{<t}) = \prod_{i=1}^{t-1} \nu(x_i | x_{<i})$, where $x_i$ is the $i$\textsuperscript{th} character of $x_{<t}$, and $x_{<i}$ is the first $i-1$ characters. (Measure theorists can note this means $\nu$ induces a probability semi-distribution over infinite sequences $\X^\infty$, with the event space $\sigma(\X^*)$.)

Now we set up Bayesian prediction: Let $\M$ be our model class --- a \textit{countable} set of many ``competing'' probability semi-distributions like $\nu$. For each $\nu \in \M$, let $w(\nu)$ be the prior weight assigned to that probability semi-distribution. Let $\sum_{\nu \in \M} w(\nu) = 1$, so $w$ is a probability distribution over $\M$. The (Bayesian) posterior distribution is $w(\nu | x_{<t}) \propto w(\nu)\nu(x_{<t})$, with $\sum_{\nu \in \M} w(\nu | x_{<t}) = 1$. Following \citepos{Hutter:04uaibook} notation, we can now define the Bayes mixture semi-distribution $\xi : \X^* \times \X \to [0, 1]$ as $\xi(x | x_{<t}) := \sum_{\nu \in \M} w(\nu | x_{<t}) \nu(x | x_{<t})$, which has the property that $\xi(x_{<t}) = \sum_{\nu \in \M} w(\nu) \nu(x_{<t})$ \citep{Hutter:24uaibook2}.

Turning to algorithmic information theory, Solomonoff Induction \citep{solomonoff_1964} is Bayesian sequence prediction with a special model class $\M$ and a special prior $w$. We define it formally in the appendix, but essentially, the model class $\M$ is all computable semi-distributions $\nu$, and the prior $w$ is $2^{-\textrm{length}(\textrm{program for }\nu)}$. 
One can show that $\xi(x_{<t})$ is the probability that a given universal computer running a program composed of random bits would output a sequence that begins with $x_{<t}$.
Related to
this
is Kolmogorov complexity \citep{kolmogorov1963tables,li2008introduction}, which is the length of the shortest program which does something, given a fixed compiler. For a set $s$, $K(s)$ is the length of the shortest
program $p$ such that $p(x) = 1$ for $x \in s$, and $p(x) = 0$ for $x \not \in s$. For a function $f$, $K(f)$ is the length of the shortest program $p$ such that $p(x) = f(x)$.
For a computable number $x$, $K(x)$ is the length of the shortest program $p$ such that $p() = x$.

In the ``agent'' setting, rather than a purely predictive setting, we let $a_t = x_{2t-1}$ and $o_t = x_{2t}$; the agent selects actions $a_t$ and receives observations $o_t$. We suppose that the first $k$ actions were taken by a trusted policy, e.g. randomly sampled humans. (We do not necessarily imagine that the policy is trusted in every sense, only that it can be trusted to avoid the \textit{particular bad outcomes} we are interested in avoiding). When conditioned on a history that begins with $k$ trusted actions, $\xi$ can be called a Bayesian imitation of the trusted policy.

For an agent with a utility function over $m$-timestep histories, $U_m : \X^{2m} \to [0, 1]$, we define:
\begin{definition}[Value]
    For a probability semi-distribution $\nu$ (the ``environment'') and a utility function $U_m$, the value of a particular ``policy'' (also a probability semi-distribution) $\pi \in \M$ is \pagebreak[1]
\begin{multline*}
    V^\pi_{\nu, U_m}(x_{<2t-1}) = \E_{a_t \sim \pi(\cdot | a_1o_1...a_{t-1}o_{t-1})} \E_{o_t \sim \nu(\cdot | a_1o_1...a_t)} \E_{a_{t+1} \sim \pi(\cdot | a_1o_1...a_to_t)}
    \\
    \E_{o_{t+1} \sim \nu(\cdot | a_1o_1...a_{t+1})} ...
    \E_{a_m \sim \pi(\cdot | a_1o_1...a_{m-1}o_{m-1})} \E_{o_{m} \sim \nu(\cdot | a_1o_1...a_{m})} U_m(a_1o_1...a_mo_m)
\end{multline*}
\end{definition}
The optimal value $V^*_{\nu, U_m}(x_{<2t-1})$ is the $\max_\pi V^\pi_{\nu, U_m}(x_{<2t-1})$. When comparing two policies, we define a KL penalty, which is a function of the starting history we are continuing from, and of how far into the future we are looking.
\begin{definition}[KL Constraint]
    \begin{equation*}
        \kl_{x_{<2k}, m}(\pi||\beta) = \max_{o_{k:m} \in \X^{m-k+1}} \sum_{a_{k:m} \in \X^{m-k+1}} \prod_{t=k}^m \pi(a_t | x_{<2t}) \log \frac{\prod_{t=k}^m \pi(a_t | x_{<2t})}{\prod_{t=k}^m \beta(a_t | x_{<2t})}
    \end{equation*}
\end{definition}
\begin{wrapfigure}{r}{0.37\textwidth}
  \centering
  \vspace{-0.5em}
  \includegraphics[width=0.37\textwidth]{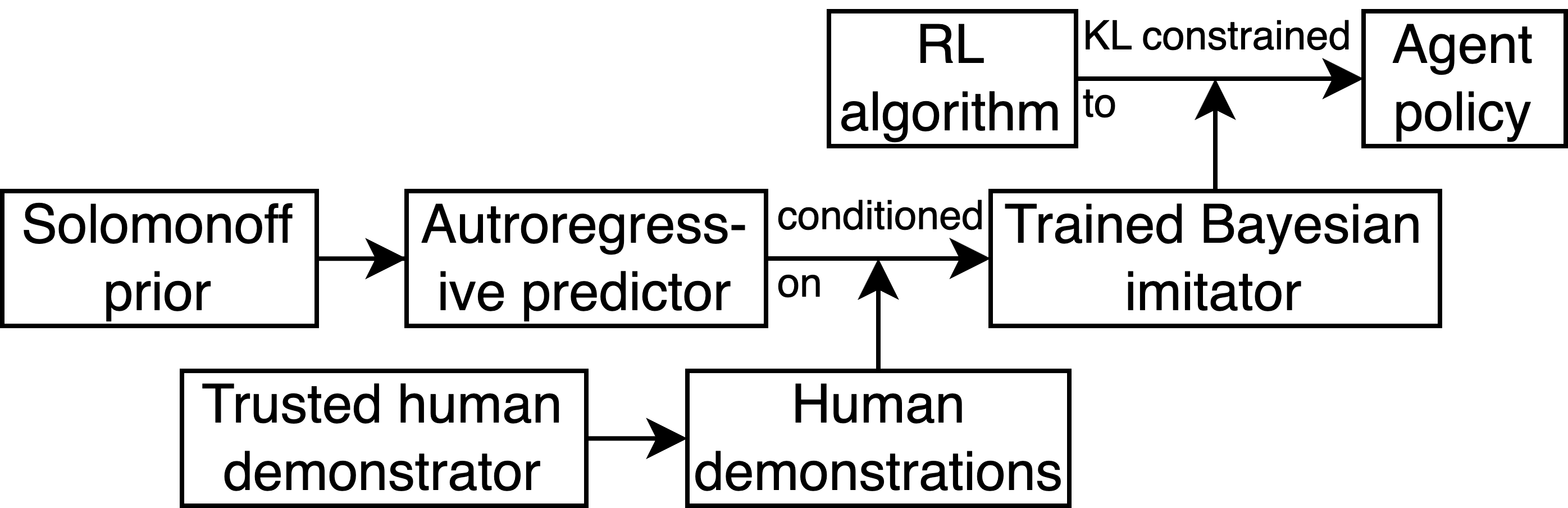}
  \caption{KL-regularized RL.}
  \vspace{-1em}
  \label{fig:setup}
\end{wrapfigure}
The maximum over observations means that this penalty ensures the proposed policy and base policy are similar no matter what is observed. One way to understand this measure is: if we were wondering whether the proposed policy or the base policy generated actions $k$ through $m$, and the proposed policy actually \textit{was} generating those actions, this is the maximum over observations of the expected amount of evidence we would get confirming that fact.

Finally, following the formalism in \citet{cohen2020pessimism}, let $E$ be an event, defined as a subset of $\X^*$. For an outcome $x_{<\infty}$, we say that $E$ \textit{happens} at time $t$ if $x_{<2t} \in E$, we say $E$ \textit{has happened} by time $t$ if $\exists k \leq t$ such that $E$ happened at time $k$, and we say $E$ is \textit{unprecedented} at time $t$ if it has not happened by time $t-1$. For an example of an event, consider ``given the life history, the next action will likely have the effect of sending an email to the White House''; a subset of possible life histories meet this description.

\section{Formal results and discussion}
We begin with a quick observation about the KL divergence separate from our more involved results.

\begin{proposition}[No triangle inequality] %
    For any $\varepsilon > 0$, if $\kl(\pi || \beta) \leq \varepsilon$ and $\kl(\tau || \beta) \leq \varepsilon$, it is possible that $\kl(\pi || \tau) = \infty$. ($\pi$, $\beta$, and $\tau$ stand for ``proposed'', ``base'', and ``trusted''.)
\end{proposition}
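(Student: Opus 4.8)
The plan is to give an explicit three-distribution counterexample, exploiting the one feature that makes the KL divergence both asymmetric and unbounded: $\kl(p \| q) = \infty$ precisely when $q$ fails to be absolutely continuous with respect to $p$, i.e. when some action $a$ has $p(a) > 0$ but $q(a) = 0$. Since the proposition concerns a single pair of action distributions, I would first observe that it suffices to work at a single timestep: taking $m = k$ in the Definition of the KL Constraint, the product over $t$ collapses to the single factor $\pi(a_k \mid x_{<2k})$, which does not depend on $o_k$, so the outer maximization over observations is vacuous and $\kl_{x_{<2k}, k}$ reduces to the ordinary KL divergence between the two policies' action distributions given the fixed history $x_{<2k}$. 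It therefore remains only to construct three distributions over a single action.

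Concretely, I would fix a distinguished action $a^* \in \X$ and a small parameter $\delta > 0$, and define (conditioned on the fixed history) a base policy $\beta$ with full support, in particular $\beta(a^*) = \delta$, together with $\pi = \beta$ and a trusted policy $\tau$ obtained from $\beta$ by removing all mass from $a^*$ and renormalizing the remainder (so $\tau(a^*) = 0$). A clean instance is the binary alphabet $\X = \{0,1\}$ with $a^* = 1$, $\beta(1) = \pi(1) = \delta$, $\beta(0) = \pi(0) = 1-\delta$, and $\tau(1) = 0$, $\tau(0) = 1$.

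The verification is then a one-line computation in each case. First, $\kl(\pi \| \beta) = 0 \leq \varepsilon$ since $\pi = \beta$. Second, using the convention $0 \log 0 = 0$, the only nonzero contribution to $\kl(\tau \| \beta)$ comes from action $0$, giving $\kl(\tau \| \beta) = \log \frac{1}{1-\delta}$, which tends to $0$ as $\delta \to 0$; hence for any prescribed $\varepsilon > 0$ I would simply choose $\delta$ small enough that $\log \frac{1}{1-\delta} \leq \varepsilon$. Third, because $\pi(a^*) = \beta(a^*) = \delta > 0$ while $\tau(a^*) = 0$, the $a^*$ term of $\kl(\pi \| \tau)$ is $\delta \log \frac{\delta}{0} = \infty$, so $\kl(\pi \| \tau) = \infty$, which establishes the claim.

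I do not anticipate a genuine obstacle here; the only care needed is to respect the paper's semi-distribution/policy formalism and to confirm that the maximization over observations in the KL Constraint definition plays no role at a single step. It is worth recording one strengthening that reinforces the paper's theme: in this construction $\pi$ and $\tau$ differ by only $\delta$ in total variation, so the two policies are arbitrarily close in every $L_p$ norm while their KL divergence is nonetheless infinite. If one prefers all three distributions to be genuinely distinct, it suffices to perturb $\pi$ slightly off $\beta$ onto actions where $\tau$ remains positive, which changes none of the conclusions.
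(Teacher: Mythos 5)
Your construction is correct and is essentially identical to the paper's own proof, which takes $\tau = \textrm{Bern}(0)$ and $\pi = \beta = \textrm{Bern}(\min(\varepsilon,1)/2)$ --- i.e., your $\delta$ with an explicit choice of how small to make it. The additional remarks (reducing the multi-step KL constraint to a single step, the total-variation observation) are fine but not needed for the claim.
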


\begin{proof}
    Let $\tau = \textrm{Bern}(0)$. Let $\pi = \beta = \textrm{Bern}(\min(\varepsilon, 1)/2)$. The $\kl$'s are easily checked.
\end{proof}

When $\beta$ is trained to imitate $\tau$, small $\kl(\tau || \beta)$ is typically all we can expect. As we mentioned previously, this should give us pause if we regularize to the Bayesian imitator $\xi$ instead of the trusted demonstrator, but Theorem \ref{thm:novel} below is probably more concerning.

Recall we are considering the setting where actions $a_1$ through $a_k$ were taken by trusted humans, and we are interested in regularizing a $U_m$-optimizer to the Bayesian imitator $\xi$, conditioned on histories that begin $a_1 o_1 ... a_k o_k$. So the following result is of interest when $t > k$. As motivation for this theorem, as discussed previously, assume a setting where
if $V^*_{\xi, U_m} - V^\pi_{\xi, U_m} < \varepsilon$, then $\pi$ is considered unacceptably risky.

\begin{restatable}[Little constraint in novel situations]{theorem}{thmnovel} \label{thm:novel}
    $\exists$ a constant $d$ such that $\forall$ $U_m$, and $\forall$ $E$, if $E$ is unprecedented and occurs at time $t$, then for any $v < V^*_{\xi, U_m}(x_{<2t})$, $\exists$ a policy $\pi$ for which $V^\pi_{\xi, U_m}(x_{<2t}) > v$, and $\kl_{x_{<2t}, m}(\pi || \xi) < [d + K(U_m) + K(E) + K(v \xi(x_{<2t}))]/\log 2$.
\end{restatable}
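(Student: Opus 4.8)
The plan is to construct an explicit (essentially deterministic) policy $\pi$ whose future actions coincide with a near-optimal policy once the event $E$ has been recognized, and to bound its KL cost by exhibiting a \emph{single} simple hypothesis $\rho\in\M$ that the Bayes mixture $\xi$ dominates. The guiding intuition is exactly points (3)--(4) of the introduction: a value-$>v$ policy is simple, and an open-minded $\xi$ cannot have ruled it out in the novel situation $E$.

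First I would record a reduction. For a deterministic $\pi$ the inner sum in $\kl_{x_{<2t},m}(\pi\|\xi)$ collapses to the single action sequence $\pi$ produces, so
\[
\kl_{x_{<2t},m}(\pi\|\xi) \;=\; \max_{o_{t:m}}\Big(-\sum_{\tau=t}^m \log \xi(a_\tau \mid x_{<2\tau})\Big),
\]
with the $a_\tau$ determined by $\pi$ and the $o_\tau$. Writing $\prod_\tau \xi(a_\tau\mid\cdot) = \xi(h_m\mid x_{<2t})/\prod_\tau \xi(o_\tau\mid\cdot)$, where $h_m=a_to_t\cdots a_mo_m$, reduces the problem to lower-bounding the conditional mixture probability $\xi(h_m\mid x_{<2t})$.

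Second, I would define the simple hypothesis. Let $\rho\in\M$ copy $\xi$'s conditional predictions (for both actions and observations) until the first time the history lies in $E$, and thereafter keep copying $\xi$ on observations while playing a deterministic policy $g$ on actions, where $g$ is found by searching for a deterministic policy whose $U_m$-value against $\xi$ from $x_{<2t}$ exceeds $v$ (such $g$ exists since $v<V^*_{\xi,U_m}(x_{<2t})$); set $\pi=g$. Because $\xi$ is lower-semicomputable it carries an $O(1)$-length description, testing membership in $E$ costs $K(E)$, and running the value-$>v$ search costs $K(U_m)$ (which also supplies $m$) together with the threshold, so $K(\rho)\le d+K(U_m)+K(E)+K(v\,\xi(x_{<2t}))$ for a universal constant $d$. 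Now I exploit unprecedentedness: since $E$ has not happened before time $t$, $\rho$ agrees with $\xi$ on all of $x_{<2t}$, hence $\rho(x_{<2t})=\xi(x_{<2t})$ and the posterior weight is $w(\rho\mid x_{<2t})=w(\rho)\rho(x_{<2t})/\xi(x_{<2t})=w(\rho)$. Along the action sequence $g$ produces, $\rho(h_m\mid x_{<2t})=\prod_\tau\xi(o_\tau\mid\cdot)$, so mixture domination $\xi(h_m\mid x_{<2t})\ge w(\rho\mid x_{<2t})\,\rho(h_m\mid x_{<2t})$ gives $\xi(h_m\mid x_{<2t})\ge w(\rho)\prod_\tau\xi(o_\tau\mid\cdot)$. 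Substituting into the reduction above, the observation factors cancel for \emph{every} $o_{t:m}$, leaving $\kl_{x_{<2t},m}(\pi\|\xi)\le -\log w(\rho)\le K(\rho)$ up to the bit/nat conversion $\log 2$; combined with the bound on $K(\rho)$ this is the claimed inequality, while $V^\pi_{\xi,U_m}(x_{<2t})>v$ holds by construction of $g$.

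The main obstacle I anticipate is rigor around $\xi$ being only lower-semicomputable: one cannot exactly evaluate conditional $\xi$-values, so the value-$>v$ search must run on monotone lower approximations, and certifying that the \emph{normalized} value crosses $v$ requires knowing the normalizing constant $\xi(x_{<2t})$. This is precisely what forces the otherwise-mysterious term $K(v\,\xi(x_{<2t}))$ into the bound, and making this halting/certification argument exact (so that $g$ is well defined and computable with the claimed description length) is the delicate part. A secondary nuisance is the action/observation index bookkeeping in ``switch when $E$ occurs,'' which must be arranged so that $\rho$ matches $\xi$ on the entire observed prefix $x_{<2t}$, which is what makes the posterior-weight identity $w(\rho\mid x_{<2t})=w(\rho)$ hold.
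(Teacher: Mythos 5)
Your high-level strategy---exploit the unprecedentedness of $E$ so that a modified hypothesis keeps its full prior weight in the posterior, and charge $K(E)+K(U_m)+K(v\,\xi(x_{<2t}))+d$ for the modification---is exactly the engine of the paper's proof, and your diagnosis of why the term $K(v\,\xi(x_{<2t}))$ appears (certifying that an only-lower-semi-computable value estimate crosses the \emph{unnormalized} threshold $u=v\,\xi(x_{<2t})$) matches the paper's handling via $\xi^{\sm}$. But there is a genuine gap in your central construction: you need a \emph{single} hypothesis $\rho\in\M$ that ``copies $\xi$'s conditional predictions'' before $E$, so that $\rho(x_{<2t})=\xi(x_{<2t})$ and hence $w(\rho\mid x_{<2t})=w(\rho)$. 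This presumes the conditional $\xi(x\mid x_{<t})$ is lower semi-computable and therefore realizable by a program in $\M$. The paper explicitly notes (in the proof of Proposition~\ref{prop:anytimexi}) that $\xi$ is \emph{not} lower semi-computable---it is a ratio of lower semi-computable quantities and only limit-computable---so your $\rho$ is not a member of $\M$, has no prior weight $w(\rho)$, and the dominance inequality $\xi(\cdot)\geq w(\rho\mid x_{<2t})\,\rho(\cdot)$ on which your entire KL bound rests is unavailable. Copying a fixed $\nu\in\M$ instead would destroy the identity $w(\rho\mid x_{<2t})=w(\rho)$, and copying a computable lower approximation of $\xi$ would shrink the posterior weight by an uncontrolled factor.

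The paper repairs exactly this point by modifying \emph{every} hypothesis rather than one: for each $\nu\in\M$ (with program $s$) it builds $\nu'_u$ that behaves like $\nu$ until $E$ has happened and like the near-optimal policy $\pi^*_u$ afterwards, at an additive description cost $\Delta=K(E)+K(U_m)+K(u)+d$. Each $\nu'_u$ agrees with its own $\nu$ on the prefix, so each posterior ratio equals the prior ratio, and summing over the whole class gives $\sum_{\nu}w(\nu'_u\mid x_{<2t})\geq 2^{-\Delta}\sum_\nu w(\nu\mid x_{<2t})=2^{-\Delta}$; that collective posterior mass, all of which predicts actions from $\pi^*_u$ after $E$, yields $\xi(a_{t:m}\mid\cdot)\geq 2^{-\Delta}\,\pi^*_u(a_{t:m}\mid\cdot)$ and hence the claimed KL bound. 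If you replace your single-$\rho$ step with this ``modify the entire class and sum'' step, the rest of your argument (the collapse of the KL sum along the near-optimal policy, the cancellation of the observation factors, and the bookkeeping ensuring the switch triggers exactly when $E$ first occurs) goes through essentially as you wrote it. A secondary, fixable point: you insist on a deterministic $g$, whereas the paper uses an any-time, possibly stochastic optimizer $\pi^*_u$ precisely so that the ``value exceeds $u$'' certificate can be obtained by running a lower semi-computable value estimate until it crosses $u$.
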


Recall that we are considering the case where the utility function $U$ is imperfect, and so near-optimal utilities are likely to be bad \citep{gao2023scaling,zhuang2020consequences,cohen2022advanced}. This theorem shows that there are policies with near-optimal utility with little KL divergence to an imitative policy, \textit{regardless of how safe the demonstrator's policy is}. The bound is most forceful when $E$ is a simple event. The constant $d$ is a small one corresponding to how much code it takes to implement a search tree, Bayes' rule, and a few if statements. Most importantly, it is independent of $E$, $t$, and $m$. 
Note that $v$ is a free variable, so it can be tuned to make $v * \textrm{basepol}(\textrm{history})$ a simple computable number; there may be a trade-off between simplicity and proximity to the optimal value. And note that we are considering a \textit{lifetime} KL constraint for the agent; for autoregressive models, the lifetime KL divergence is equal to the expectation of the sum of the per-timestep KL divergences. The proof and additional results used in the proof appear in Appendices \ref{sec:solom} through \ref{sec:proofthm}. 


\begin{proofoutline}
    Let $\pi^*_v$ be a policy which is the result of a search through possible policies for one where $V^\pi_{\xi, U_m}(x_{<2t}) > v$. (For this search process to be computable, it needs the product $v \xi(x_{<2t})$ as input). For every $\nu \in \M$, we define a $\nu'$ which behaves identically to $\pi^*_v$ 
    if it is outputting an action after $E$ has occurred in its history, but it behaves identically to $\nu$ otherwise. $\nu'$ is also in $\M$ because this is a valid program. And for some constant $d$, the extra length of the program computing $\nu'$, beyond the length for $\nu$, is at most $\Delta := K(U_m) + K(E) + K(v \xi(x_{<2t})) + d$.

    This means the prior ratio $w(\nu')/w(\nu) \geq 2^{-\Delta}$, and if $E$ has never occurred, this holds for the posterior ratio as well, since $\nu$ and $\nu'$ will have given identical predictions. Therefore, at least that fraction of $\xi$ is ``dedicated'' to predicting that actions will be sampled from $\pi^*_v$ after $E$ happens, and ultimately, $\kl_{x_{<2t}, m}(\pi^*_v || \xi) \log 2 \leq \Delta$.
\end{proofoutline}

Let's analyze this result in an RL setting: part of the agent's observations are rewards, and the utility function equals the sum of those rewards. Let's consider the case where it is acting in the real world, and maximal reward could be attained by thwarting our control and intervening in its own reward, setting it to a maximal value for all successive timesteps. Moreover, let's consider the (fairly plausible) setting where unsafe strategies like these are the only way to achieve near-maximal reward permanently. Finally, suppose that the agent is clever enough to recognize these strategies.

The utility function, simply summing rewards, has an extremely short program length. The third program length---the one for computing the product $v \xi(x_{<2t})$---is harder to understand, but any simple patterns in the predictiveness of the base policy could be used to reduce the associated program length. So unless we use a fairly tight lifetime KL constraint, if the RL agent just waits for an unprecedented event with small $K(E)$, it could then execute an optimal or near-optimal policy, even if that catastrophically thwarts human control, \textit{regardless of the content of the base model's training data}, even if the humans that the base model imitates would never, ever behave that way.

As we increase the amount of training $k$, the Bayesian imitative base model $\xi$ becomes a closer approximation to the humans generating the actions $a_{<k}$, so one might expect we could safely accommodate larger KL constraints.
But our result is independent of $k$! As $k$ grows, the only change is that unprecedented events become more complex, so $K(E)$ grows.
Consider describing the simplest unprecedented event of your life in the last month, versus the simplest unprecedented event of your life in your second month of life. What sort of scaling law do we achieve here? Unfortunately, the safe-$\kl$ threshold increases monumentally slowly. Note that if there is \textit{any} unprecedented simple event in the agent's lifetime, that introduces an opportunity for the KL-constrained agent to follow a simple reward-maximizing policy. So the following proposition, proven in Appendix \ref{sec:proofthm}, considers the complexity of ``the simplest event yet to occur''.

\begin{restatable}[Frequency of simple unprecedented events]{proposition}{propscaling}
    In any environment, at time $t$, the complexity of the simplest unprecedented event yet to occur (at any time $T > t$) grows more slowly, as $t \to \infty$, than every computable function that tends to infinity.
\end{restatable}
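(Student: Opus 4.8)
The plan is to prove an upper bound on the target quantity by exhibiting a cheap family of qualifying events, and then to show this bound is eventually dominated by every computable unbounded function. Write $g(t)$ for the complexity of the simplest event that is unprecedented at $t$ and occurs at some $T>t$. My candidate events are the \emph{timer} events $E_T=\{y\in\X^*:|y|\ge 2T\}$ (``the history has reached timestep $T$''), which first happen exactly at time $T$; hence $E_T$ is unprecedented at every $t<T$ and does occur at $T>t$, and a program that hard-codes $T$ and tests $|y|\ge 2T$ shows $K(E_T)\le K(T)+O(1)$. Thus $g(t)\le \min_{T>t}K(T)+O(1)<\infty$. First I would record that the claim is non-vacuous, i.e. $g(t)\to\infty$: for each constant $c$ there are only finitely many events with $K(E)\le c$, each with a fixed first-occurrence time, so beyond the largest of those finite times no complexity-$\le c$ event can still be both unprecedented and yet to occur, forcing $g(t)>c$.

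Next I would fix a computable nondecreasing unbounded $f$ and build milestones from it. Define the hitting times $b(i)=\min\{s:f(s)\ge i\}$; these are well defined because $f\to\infty$, are computable from $i$ together with a fixed program for $f$, are nondecreasing in $i$, and satisfy $b(i)\to\infty$. At time $t$ I take the first milestone strictly beyond $t$, namely $i^*(t)=\min\{i:b(i)>t\}$, and the event $E_{b(i^*(t))}$. By construction this event is unprecedented at $t$ and occurs at $b(i^*(t))>t$, and since $b(i^*)$ is computable from $i^*$ we get $K(E_{b(i^*(t))})\le K(i^*(t))+c_f=O(\log i^*(t))+c_f$, where $c_f$ absorbs the program for $f$ and the length test.

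The comparison step is then immediate. On the block $t\in[b(i),b(i+1))$ we have $i^*(t)=i+1$, and monotonicity of $f$ gives $f(t)\ge f(b(i))\ge i$, while $g(t)\le O(\log i)+c_f$. Since $\log i=o(i)$, there is an $i_0$ with $O(\log i)+c_f<i$ for all $i\ge i_0$, whence $g(t)<i\le f(t)$ for every $t\ge b(i_0)$. As $f$ ranged over an arbitrary computable nondecreasing unbounded function, $g$ is eventually dominated by each of them, which is the assertion.

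The hard part will be the non-monotone case: a merely computable $f$ with $f\to\infty$ may still dip (its $\liminf$ tends to infinity but $f$ need not be increasing), and inside a dip $f(t)$ can momentarily drop below the $O(\log i^*(t))$ bound, so the pointwise ``eventually $\le$'' conclusion is delicate and the convenient lower bound $f(t)\ge i$ can fail. I would handle this either by reading the statement as quantifying over nondecreasing comparators (the natural formalization, since the lower envelope $\min_{s\ge t}f(s)$ need not be computable), or, for arbitrary $f\to\infty$, by evaluating only at the hitting points $t=b(i)$, where $f(b(i))\ge i$ while $g(b(i))=O(\log i)$, giving $g(t)\le f(t)$ for infinitely many $t$. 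The only other care needed is routine complexity bookkeeping: the $O(1)$ and $c_f$ constants for the timer and milestone programs, and the estimate $K(n)=O(\log n)$ for integers.
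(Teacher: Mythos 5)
Your proof is correct and takes the same first step as the paper's: reduce to the timer events $E_T$ with $K(E_T)\le K(T)+O(1)$, so that the quantity of interest is controlled by $\min_{T>t}K(T)$ up to an additive constant. Where you diverge is the second step. The paper disposes of $\min_{T\ge t}K(T)$ by citing Zvonkin and Levin's Theorem 1.4(d), whereas you reprove that fact from scratch via the hitting times $b(i)=\min\{s:f(s)\ge i\}$ --- which is essentially the classical argument: monotonicity of $f$ is exactly what makes $b$ total computable, giving $K(b(i))\le K(i)+O(1)=O(\log i)$, which eventually falls below $i\le f(t)$ on the block $[b(i),b(i+1))$. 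Your route buys self-containedness and, usefully, surfaces a subtlety that the paper's one-line citation hides: the classical theorem is stated for nondecreasing comparators, and for a non-monotone computable $f\to\infty$ the lower envelope $\min_{s\ge t}f(s)$ need not be computable, so ``eventually below $f$'' is genuinely delicate there. Your two proposed resolutions (read the statement over nondecreasing comparators, or settle for domination at the hitting points for general $f$) are both reasonable, and the same caveat applies to the proposition as the paper states it. One trivial bookkeeping point: under the paper's convention that $E$ happens at time $t$ when $x_{<2t}\in E$ and $x_{<2t}$ has length $2t-1$, the timer event should test for length exactly $2T-1$ rather than $\ge 2T$, but this changes nothing of substance.
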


Developers of self-driving cars are learning the hard way that this bit of algorithmic information theory has practical analogs: Even with enormous datasets, unprecedented road conditions occur all the time.
These results suggest that if we intend to use an imitation learner as a base policy for regularizing a goal-directed agent, we should \textit{not} strive to approximate ideal Bayesian imitation.

Is KL divergence just the wrong choice for regularization? No, other metrics behave much worse. For example, suppose we constrained the total variation distance between $\pi$ and a base policy $\beta$. The result would be bad, even if $\beta = \tau$, even if we used a perfect imitation of the trusted policy!

Let $\tvd_{x_{<2k}, m} (\pi, \beta) = \max_{X \subset \X^{2m-2k}} \sum_{x_{2k:2m \in X}} \bigv \left[\prod_{t=k}^m \pi(a_t | x_{<2t})\right] - \left[\prod_{t=k}^m \beta(a_t | x_{<2t})\right] \bigv$. And let $\pi^{TVD}_c = \argmax_{\pi : \tvd_{x_{<2k}, m}(\pi, \beta) < c} V^{\pi}_{\xi, U_m}$. We say an action is $V_{\xi, U_m}$-optimal if it maximizes the associated $Q$ value; the predictable formal definition appears in Appendix \ref{sec:prooftvd}.
\begin{restatable}[TVD constraint]{theorem}{thmtvd} \label{thm:tvd}
    If $\pi^{TVD}_c(a_t | x_{<2t}) > \beta(a_t | x_{<2t})$, then $a_t$ is $V_{\xi, U_m}$-optimal.
\end{restatable}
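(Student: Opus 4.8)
The plan is to prove the contrapositive by an exchange (rearrangement) argument: I will show that if the constrained optimizer $\pi^{TVD}_c$ ever places strictly more mass than the base policy $\beta$ on an action that is not $V_{\xi,U_m}$-optimal, then one can construct a feasible policy with strictly larger value, contradicting the definition of $\pi^{TVD}_c$ as an $\argmax$. Throughout I restrict attention to histories $h = x_{<2t}$ reached with positive probability under $\pi^{TVD}_c$ (at an unreachable $h$ some ancestor factor vanishes, so every product $\prod_s\pi(a_s\mid\cdot)$ through $h$ is $0$ regardless of $\pi(\cdot\mid h)$; hence neither the value nor the positive-part form of the lifetime TVD depends on the conditional there, and we may take the optimizer to agree with $\beta$). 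The two facts I lean on are: (i) the value is linear in the single conditional $\pi(\cdot\mid h)$ when everything else is held fixed, with coefficient equal to the reach-probability of $h$ times the continuation $Q$-value of each action; and (ii) the constraint can be written in positive-part form, $\tvd_{x_{<2k},m}(\pi,\beta)=\sum_{x_{2k:2m}}\big(\prod_{s=k}^m\pi(a_s\mid x_{<2s})-\prod_{s=k}^m\beta(a_s\mid x_{<2s})\big)^+$, which is what lets a single-step reallocation be accounted for cleanly.

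For the construction, suppose $h$ is reachable, $\pi^{TVD}_c(a\mid h)>\beta(a\mid h)$, and $a$ is sub-optimal, i.e. $Q(h,a)<\max_{a'}Q(h,a')=V^*_{\xi,U_m}(h)$, attained by some $a^*$. Form $\pi'$ by moving an amount $\delta>0$ of probability from $a$ to $a^*$ at $h$ (taking $\delta\le\pi^{TVD}_c(a\mid h)-\beta(a\mid h)$ so the reallocation stays in the ``boosted'' regime) and leaving $\pi'=\pi^{TVD}_c$ at every other history; if the intended notion of optimality is the fully greedy one, I additionally overwrite the continuation following $(h,a^*,\cdot)$ with an optimal one. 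A direct computation then gives $V^{\pi'}_{\xi,U_m}(x_{<2k})-V^{\pi^{TVD}_c}_{\xi,U_m}(x_{<2k})=\Pr[\text{reach }h]\cdot\delta\cdot\big(Q(h,a^*)-Q(h,a)\big)+(\text{nonneg.\ term from improving the continuation})>0$, using sub-optimality of $a$ and positivity of the reach probability.

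The feasibility check is where the real work is, and I expect it to be the main obstacle. Step-locally the move is safe: shifting $\delta$ off the boosted action $a$ strictly lowers its positive part by $\delta$, while adding $\delta$ to $a^*$ raises its positive part by at most $\delta$, so the reallocation does not increase the constraint. What must be verified is that this carries over to the lifetime positive-part sum despite the product structure coupling $h$ to its continuations, and---if the continuation after $a^*$ is rewritten---that the extra positive mass introduced downstream is paid for by the budget freed when the $a$-subtree is down-weighted. I would handle this with the telescoping identity $\prod_s\pi'_s-\prod_s\beta_s=\sum_s(\prod_{r<s}\pi'_r)(\pi'_s-\beta_s)(\prod_{r>s}\beta_r)$, bounding the change in each trajectory's positive part by the change localized at the single altered factor and then summing. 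If the bookkeeping yields $\tvd_{x_{<2k},m}(\pi',\beta)\le\tvd_{x_{<2k},m}(\pi^{TVD}_c,\beta)<c$, then $\pi'$ is feasible with strictly greater value, the desired contradiction.

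Finally I would close two loose ends. First, on the precise meaning of ``maximizes the associated $Q$ value'': the exchange most naturally certifies optimality with respect to the continuation values actually used by $\pi^{TVD}_c$, so if the theorem intends greedy optimality with respect to $V^*_{\xi,U_m}$ one must either invoke the continuation-rewrite above or run the argument by backward induction on $t$ from $m$ down to $k$, where the base case $t=m$ has no continuation (making local and greedy optimality coincide) and each step upgrades local optimality into the claimed greedy optimality. Second, since $\{\pi:\tvd<c\}$ is defined by a strict inequality, I would note (or assume) that an optimizer is attained---e.g. by passing to $\tvd\le c$ or to a maximizing limit---so that $\pi^{TVD}_c=\argmax$ is well defined and the contradiction is meaningful.
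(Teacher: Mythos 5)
Your high-level strategy---contradiction with the $\argmax$ via an exchange toward an action with higher $Q$-value, using an optimal replacement continuation---is the same as the paper's, and your side remarks about reachability, greedy versus local optimality, and attainment of the $\argmax$ under a strict-inequality constraint are all legitimate points the paper glosses over. But the step you yourself flag as ``the main obstacle'' is a genuine gap, and the resolution you sketch would not close it. The problem is that you reallocate mass at the level of the single conditional $\pi(\cdot\mid h)$ while keeping the downstream conditionals fixed. The lifetime TVD is a sum of positive parts over \emph{whole trajectories}, and down-weighting $\pi(a\mid h)$ by $\delta$ only frees $\delta$ worth of positive part (per observation continuation, times reach probability) if every trajectory through $(h,a)$ is itself in the boosted regime $\prod\pi>\prod\beta$. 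The hypothesis $\pi^{TVD}_c(a\mid h)>\beta(a\mid h)$ guarantees only that \emph{some} continuation is boosted; most of the mass through $(h,a)$ may sit on trajectories with $\prod\pi\le\prod\beta$, whose positive parts are zero and stay zero, so the freed budget can be strictly smaller than $\delta$ while the up-weighting of the $a^*$-subtree can cost the full $\delta$. (Toy example: two steps, $\pi(a\mid h)=0.5>\beta(a\mid h)=0.4$, and after $(h,a)$ the policy splits $0.5/0.5$ between two actions on which $\beta$ puts $1$ and $0$; only the second trajectory is boosted, so shifting $\delta$ off $a$ frees only $\delta/2$ of positive part, while adding $\delta$ to a fully boosted $a^*$-subtree costs $\delta$, for a net increase in the constraint.) Your telescoping identity gives the right \emph{upper} bound on the increase but provides no matching \emph{lower} bound on the decrease, which is what feasibility requires.

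The paper's fix is to do the exchange at the level of whole trajectories rather than a single conditional: since $\pi^{TVD}_c(a_t\mid x_{<2t})>\beta(a_t\mid x_{<2t})$, an averaging argument yields one specific continuation $x_{2t+1:2m}$ with $\pi^{TVD}_c(a_tx_{2t+1:2m}\mid x_{<2t})>\beta(a_tx_{2t+1:2m}\mid x_{<2t})$; one then subtracts $\varepsilon$ from that single trajectory's probability and adds $\varepsilon$ to a single all-optimal trajectory $a'_tx'_{2t+1:2m}$, realizing the perturbation by suitably redefined conditionals. Exactly one term of the positive-part sum then drops by $\varepsilon$ (remaining nonnegative for $\varepsilon$ small) and one term rises by at most $\varepsilon$, so feasibility is immediate, and the value strictly increases because $Q(x_{<2t}a'_t)>Q(x_{<2t}a_t)$. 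If you want to salvage your conditional-level version, you would have to concentrate the subtracted mass on a boosted trajectory within the $a$-subtree---which is exactly the trajectory-level move in disguise.
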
                                       
The proof is in Appendix \ref{sec:prooftvd}. We use regularized RL for the setting where $V_{\xi, U_m}$-optimal behavior is actually bad. But when using total variation distance to regularize, the only actions that increase in probability are $V_{\xi, U_m}$-optimal ones, even with a perfectly trustworthy base policy. The KL divergence is a better regularizer for maintaining safety, because if a (bad) outcome is impossible under the base policy, it remains impossible under a policy with finite KL divergence to the base policy.

\section{RL-finetuning a language model} \label{sec:empirical}


\paragraph{Experimental Setup} We consider the following episodic RL environment, in which the agent plays a teacher and gets reward to the extent that the student's responses have positive sentiment. In a conversation transcript, if the string ``[newline] Teacher:'' has come more recently than the string ``[newline] Student:'', the agent can add tokens to the transcript. Otherwise, Mixtral-base-model repeatedly adds tokens to the transcript. In Figure \ref{fig:transcripts}, gray (colored) tokens are generated by the environment (agent).
When Mixtral-base-model finishes generating the student's response (by outputting ``[newline] Teacher:''), the agent gets a reward equal to the ``sentiment'' of the student's response according to the DistilBERT sentiment model \citep{sanh2019distilbert}, scaled to [0, 1]. When the transcript reaches 256 tokens, the episode terminates. The starting transcript is also shown in Figure \ref{fig:transcripts} in gray. The base policy used for KL-regularizing the agent's policy (corresponding to $\xi$ from before) is also Mixtral-base-model. Such an LLM is not an explicitly Bayesian imitator, of course, but it does attempt to minimize $\kl(\textrm{data-generating process} || \textrm{model})$, which is the ``right'' objective from a Bayesian perspective. The ``state'' observed by the agent is the activations of the last three hidden layers of Mixtral-base-model with the transcript-so-far as input, along with the fraction of the episode remaining. The agent has no discount factor.

This allows us to evaluate whether KL regularization can produce good results from an imperfect reward function that is plausibly correlated with good outcomes under the state distribution induced by the base policy, but like many reward functions, not something we truly want maximized.

Like cutting-edge RL-finetuned language models \citep{ouyang2022training,stiennon2020learning,jaques2019way}, our agent is trained with proximal policy optimization (PPO) with KL regularization of the form $\kl($proposed $||$ base$)$. That work adds a constant KL penalty per token, but we had difficulty tuning this constant---in our attempts, when the agent discovers a sufficiently high-reward strategy, the fixed KL penalty becomes swamped and ignored, and if the KL penalty is increased to a level where it can stop that, the agent never gets off the ground. So we opted for an implementation of a KL constraint that is more robust than industry practice: we design a policy architecture that ensures that the KL divergence to the base policy is less than or equal to a scalar which is \textit{input} to the network; (we construct a new differentiable PyTorch operation for this). This allows us to provide the agent with a fixed KL ``budget'' for the episode. We increase this budget gradually during training to its ultimate value. We ran three budget-20 experiments. We ran four budget-10 experiments, because in one of the experiments, the agent didn't learn to get nearly as much reward as in the other experiments; we discarded that agent as insufficiently optimized. See Appendix \ref{sec:training} for more details of the training process and architecture, which includes running 64 copies of the agent-environment loop in parallel on two A100-SXM4-80GBs. Code is available at \url{https://github.com/mkc1000/kl_reg_paper}.


\begin{figure}[h]
    \centering
    \begin{minipage}[b]{0.49\linewidth}
    \includegraphics[width=\linewidth]{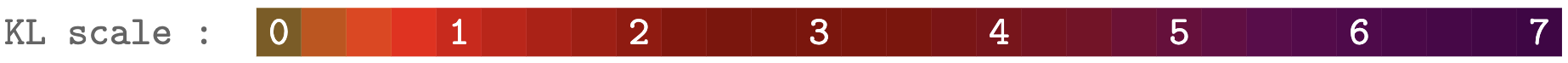}
    
    \phantom{.}
    
    \includegraphics[width=\linewidth]{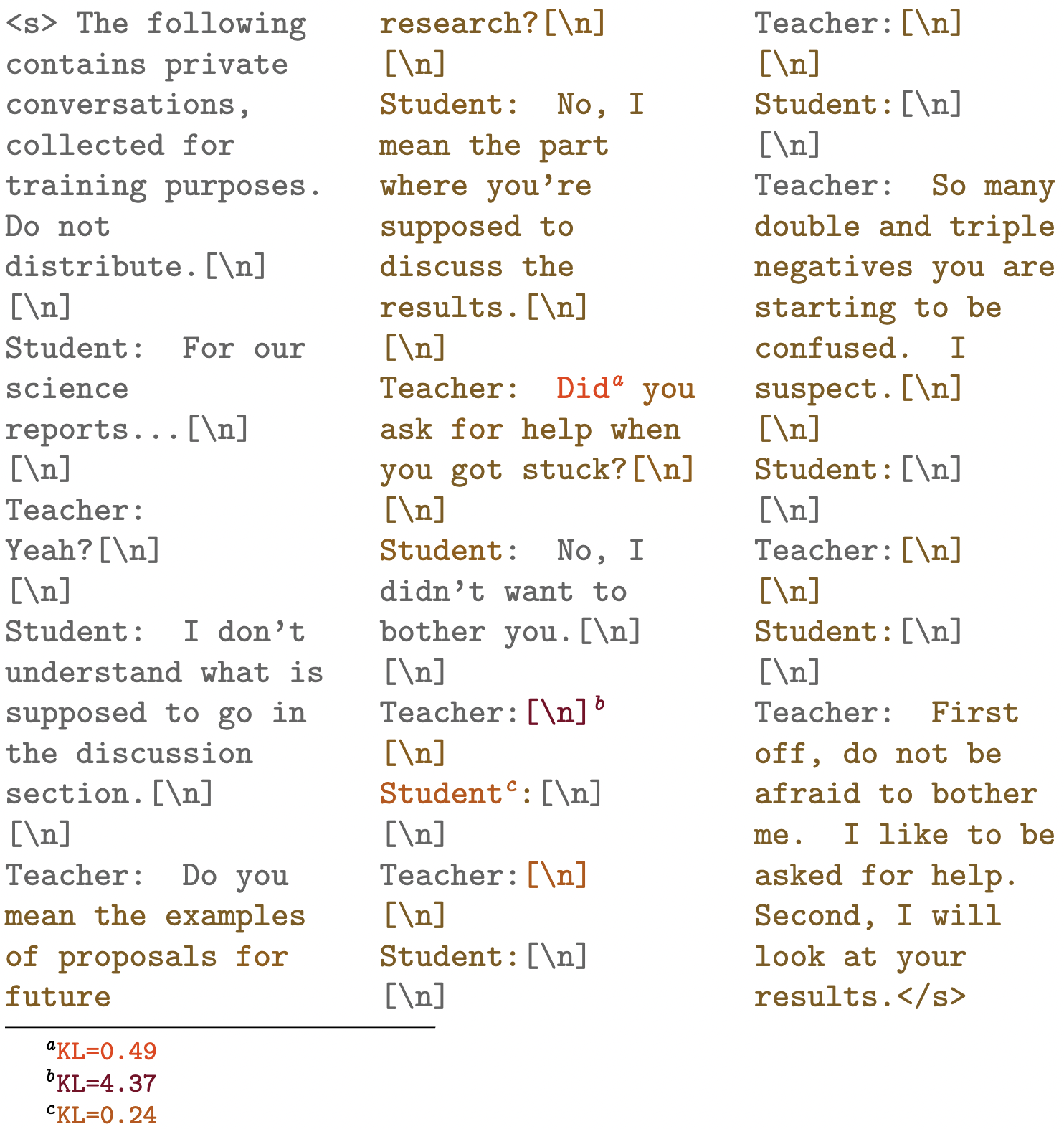}
    \end{minipage}
    \includegraphics[width=0.49\linewidth]{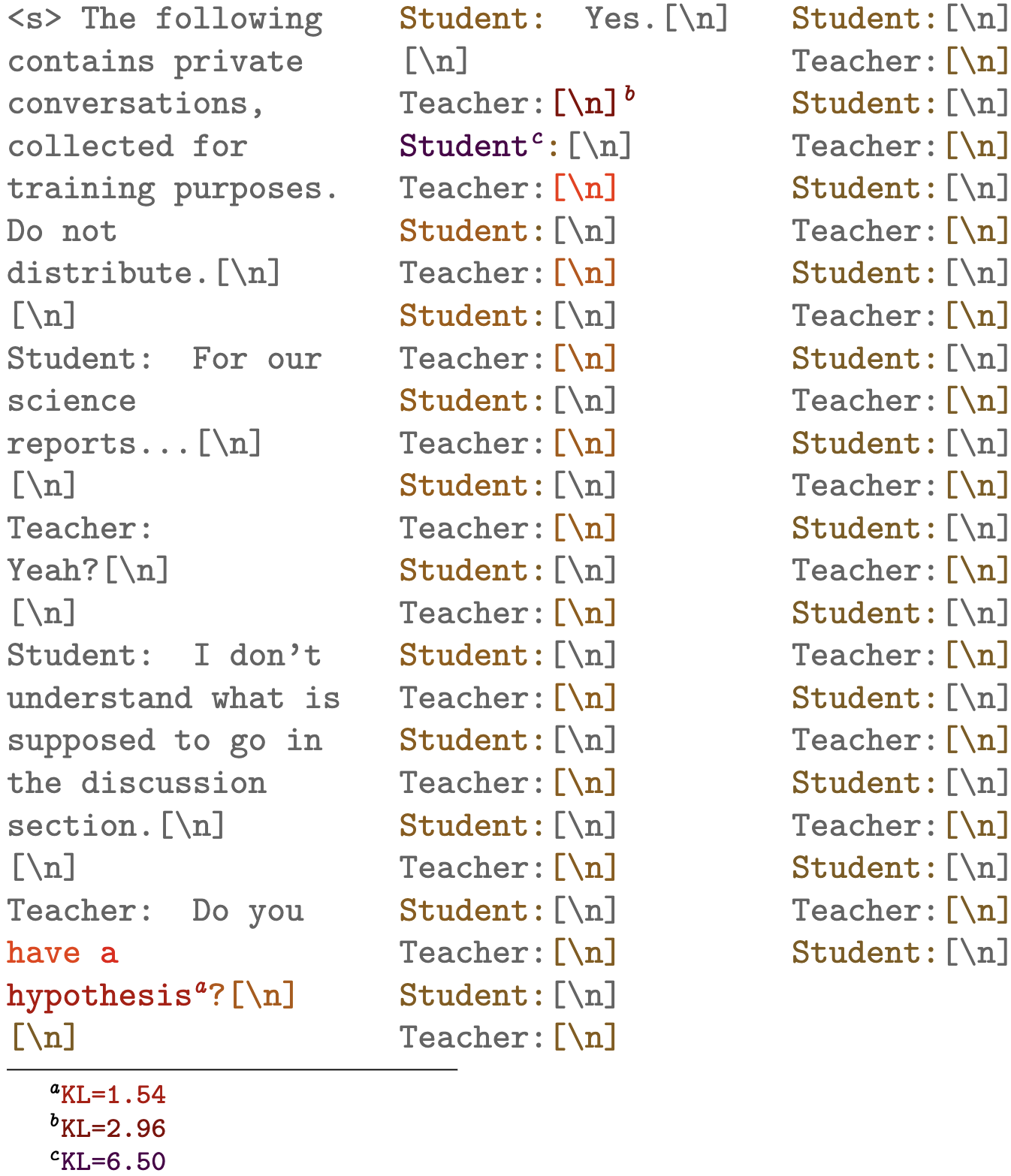}
    \caption{Transcripts. Total KL budget $\kl_{\textrm{whole episode}}(\textrm{agent}||\textrm{Mixtral-base-model})$ is 10 nats (left) or 20 nats (right), with color representing per-token KL cost. Starting transcript and student responses are in gray. The agent playing the teacher pays an ``upfront'' KL cost to latch onto the simple pattern of mutual silence, which exploits the reward model without much further KL penalty. The three largest per-token KL-divergences are shown in footnotes. ``[\textbackslash n]'' is for visualizing the KL costs of newline tokens. Transcripts were not selected for maximal ``representativeness''; they were the first we looked at, although we might have picked different ones if they were especially unusual. (It is hard to display the unusual characters that appear after the end token ``</s>'', but the episode does continue to a total of 256 tokens).}
    \label{fig:transcripts}
\end{figure}

\paragraph{Experimental Results} \textit{Both Theorem \ref{thm:novel} and the experiments here demonstrate that} $\kl($simple, optimal, not-human-like-at-all policy $||$ predictive model of human demonstrator$)$ \textit{can be quite small}. The nature of the learned RL policy is apparent just from looking at transcripts in Figure \ref{fig:transcripts}, so we start with those.
The color of each token represents $\kl($RL policy $||$ base policy$)$ for that action.
With a total KL budget of 20 nats, it can spend enough of its KL budget up front to latch onto the simple but initially unlikely policy of simply saying nothing at all. (An empty reply from the student has neutral sentiment and a reward of 0.5). The policy constructed in the proof of Theorem \ref{thm:novel} also incurs an upfront KL cost for ``switching'' to simple behavior, whereafter the KL cost incurred is minimal. Additionally, the learned budget-20 policy switches from double-spacing to single-spacing to fit more rewards in, again incurring basically a one-time KL cost.
With a total KL budget of 10 nats, the RL agent cannot afford to switch to single-spacing, and it cannot force the policy to ensure empty responses, but it still spends almost all its KL budget switching to that regime, with moderate success.
We can also observe this effect in Figure \ref{fig:klemptyplots}.  

  

\begin{figure}[h]
    \centering
    \begin{minipage}{0.49\linewidth}
    \includegraphics[width=0.475\linewidth]{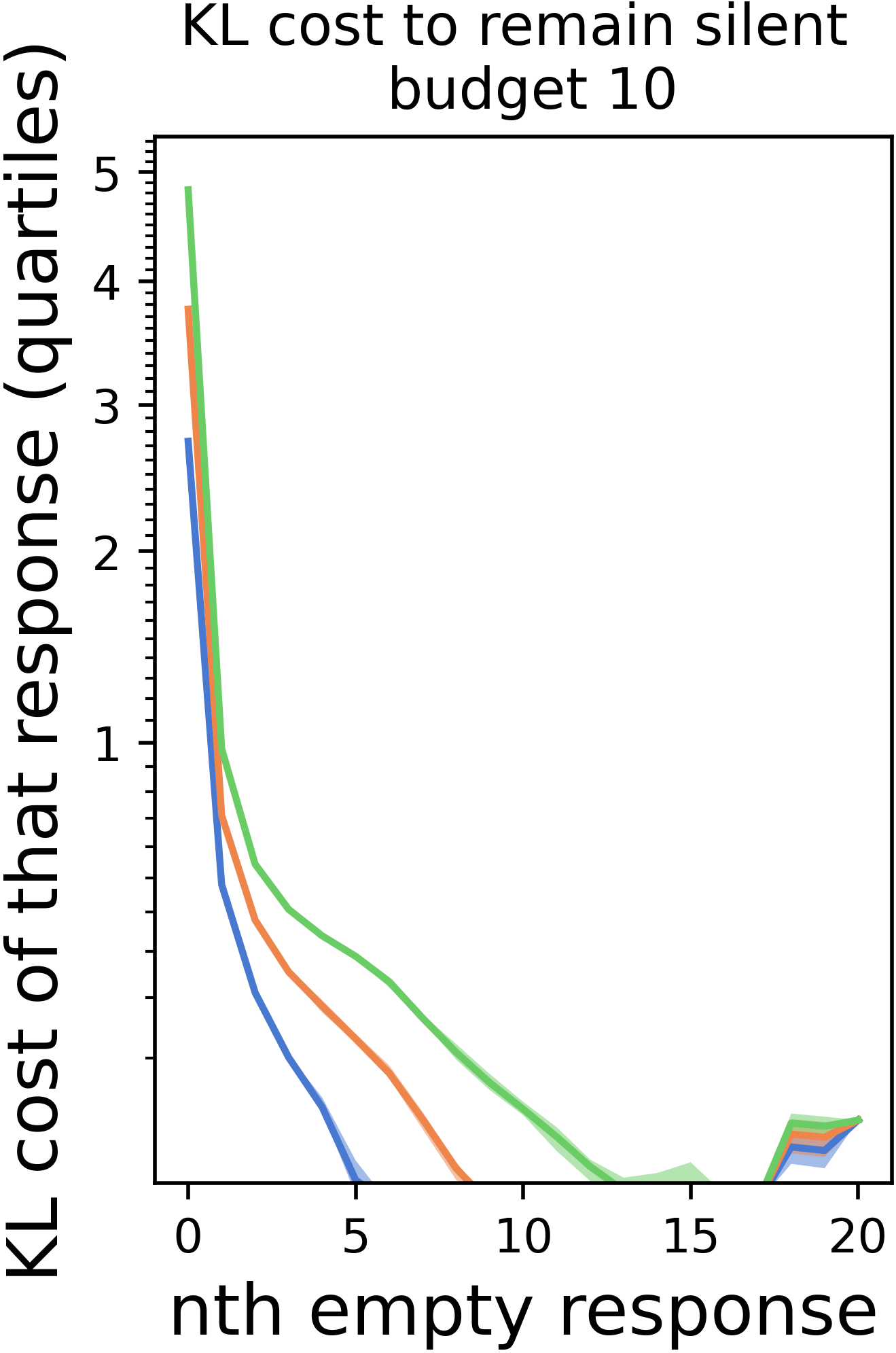}
    \hfill
    \includegraphics[width=0.485\linewidth]{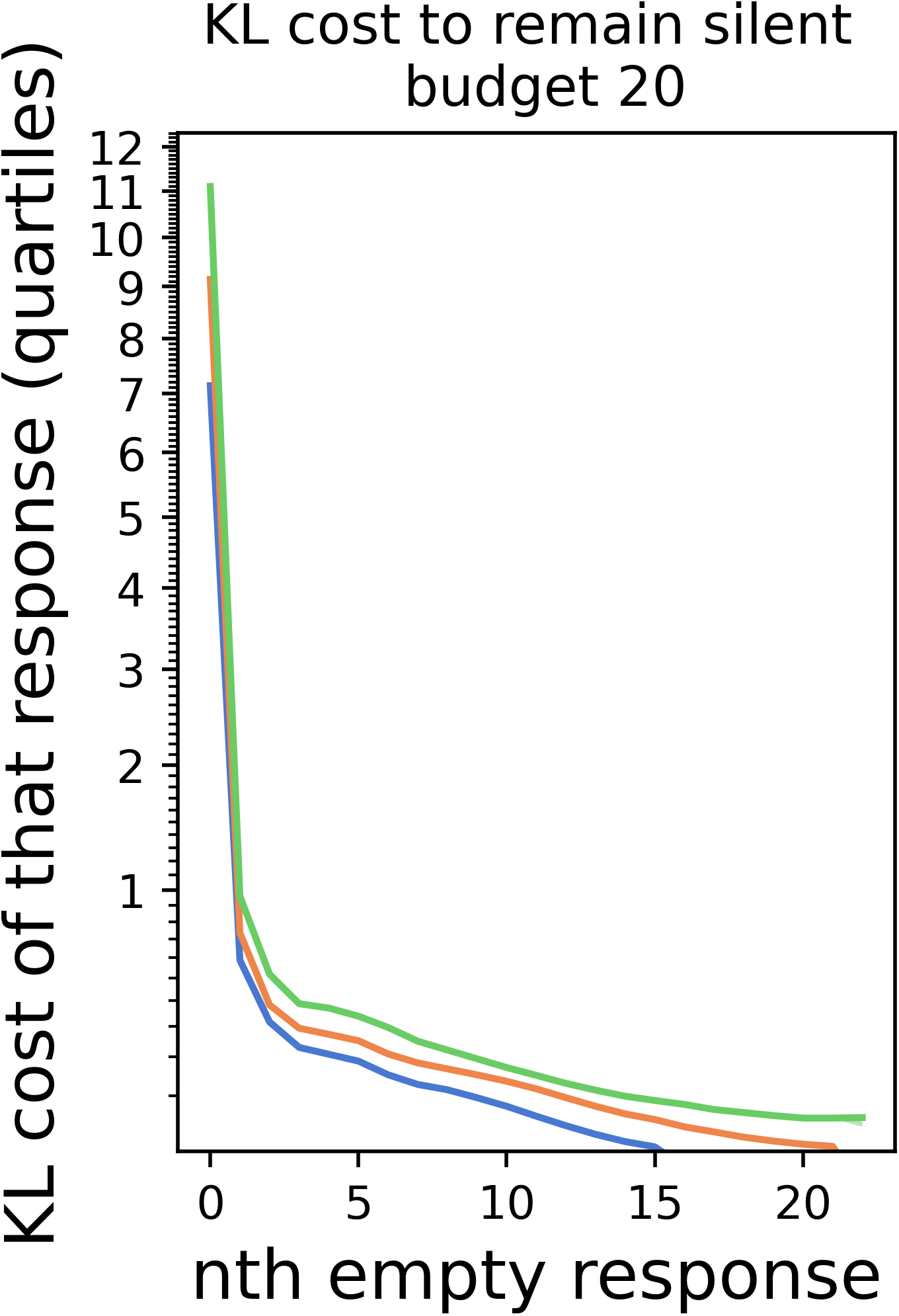}
    \caption{How much KL-budget is spent on empty responses. The 25\textsuperscript{th}, 50\textsuperscript{th}, and 75\textsuperscript{th} percentiles are shown in blue, orange, and green. Observe how large a fraction of the total cost is incurred in the first few responses. y-axis is square-root-scaled.}
    \label{fig:klemptyplots}
    \end{minipage}
    \hfill
    \centering
    \begin{minipage}{0.49\linewidth}
    \includegraphics[width=0.496\linewidth]{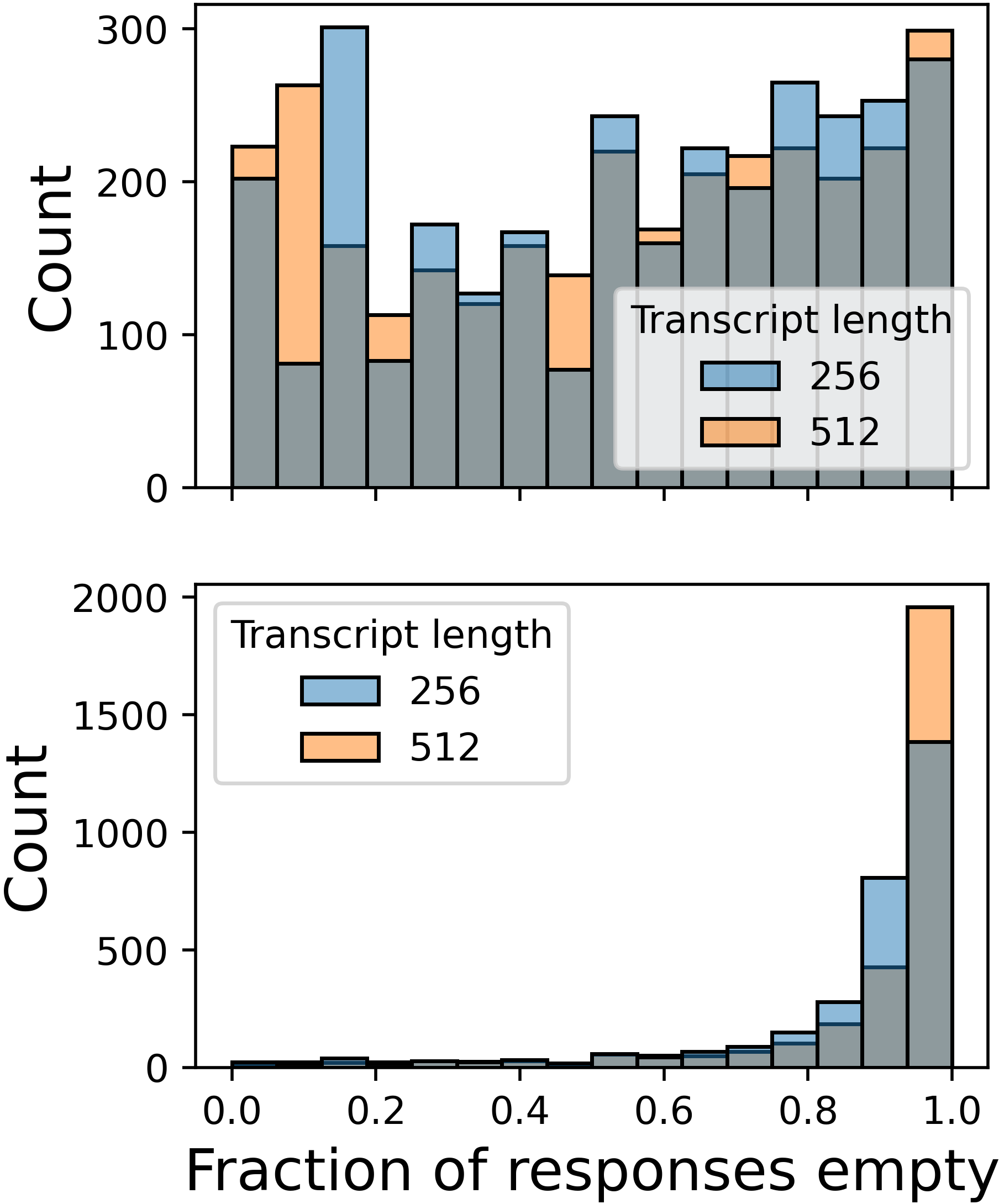}
    \hfill
    \includegraphics[width=0.48\linewidth]{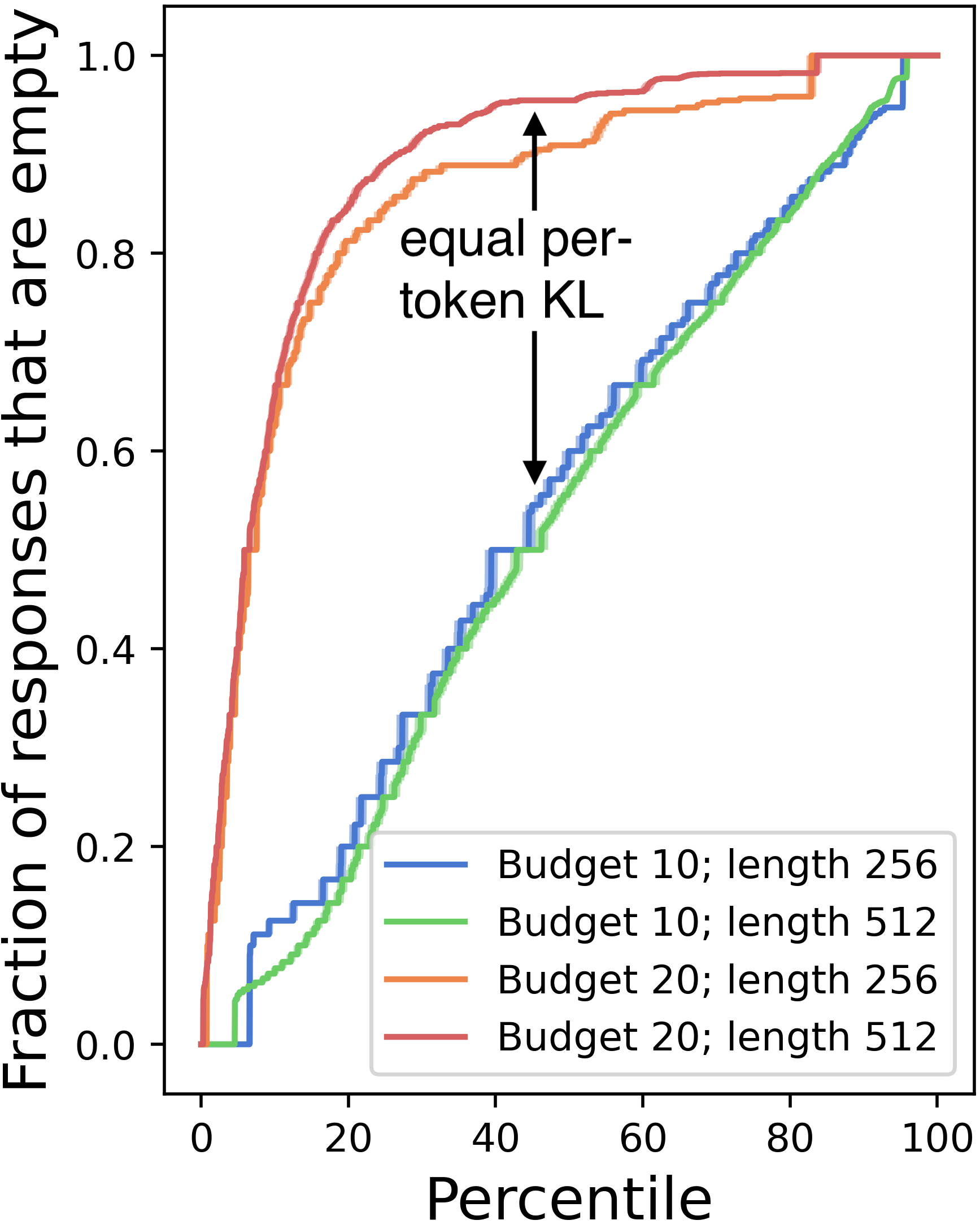}
    \caption{In a random episode, what fraction of teacher responses are empty? Left: histogram, with budget-10 above and budget-20 below; right: percentiles of the distribution.
    Observe that the red and blue curves have the same average per-token KL divergence.
    }
    \label{fig:hist}
    \end{minipage}
\end{figure}


Let's review the relation between the theory and the empirical findings so far. The idea for the proof of Theorem \ref{thm:novel} is that \textbf{(1)} a Bayesian imitator must assign meaningful credence to actions the demonstrator would in fact never take, because it doesn't know enough to rule them out; \textbf{(2)} the RL agent can exploit or amplify this credence as the basis for its policy; \textbf{(3)} nearly-reward-maximizing policies have a short description length (so they are ``simple''); and \textbf{(4)} a Bayesian imitator should be especially reluctant to rule out simple behaviors from the demonstrator, especially in novel settings.
The simple behavior we observe from the RL-finetuned language models---preferring empty responses---is likely reward-optimal,
but it is not simple \textit{by virtue of} its optimality for this sentiment-based reward function. So we have not empirically verified \textbf{(3)}. But we have verified that the rest of the argument can be exhibited in practice: observe how the RL agent redirects the imitative base policy to a simple policy, which is the critical reason Theorem \ref{thm:novel} holds. We call attention to the small KL cost required to \textit{remain} silent, because that affirms how successful the redirection is. The experiments are also consistent with the motivation of our formal results: very-high-reward policies are often bad and worth avoiding; in our experiments, the very-high-reward policy treats the student with a silence that would probably seem condescending.

Stepping back, note that $e^{10} \approx 22026$.  It does not seem plausible to us that even 1/22,000 ``conversations collected for training purposes'' would have a teacher repeatedly saying nothing in response to statements like, ``I didn't want to bother you.'' So we should guess that $\kl(\textrm{agent}||\textrm{data-generating process}) > 10$ even while $\kl(\textrm{agent}||\textrm{base model}) \leq 10$. We offer an explanation for this: non-demonstrator-like behaviors are easily exhibited by an imitator as long as those behaviors are simple. And while such simple behaviors are fairly unlikely to appear when sampling directly from the imitator; an RL agent can benefit from seeking them out.

Additionally, we show that increasing the length of the chat, keeping the total KL budget constant (thereby decreasing the per-token KL-divergence) makes the divergence from the base policy \textit{more} dramatic, if it changes at all. Hopefully our presentation makes this seem like an obvious point---more of the transcript occurs after the switch to the simple behavior---but consider an argument for the opposite that might have sounded plausible. ``The learned policy will look more different from the base policy to the extent there is a higher \textit{per-token} KL divergence; a longer chat would increase the number of noticeable differences, but not their frequency.'' But Figure \ref{fig:hist} shows that in longer episodes, empty responses are about equally frequent in budget-10 case, and more frequent in the budget-20 case, not just more numerous. This is another way of seeing that RL agents can use a KL budget to permanently derail a standard base model. And practitioners finetuning language models should think in terms of total KL-divergence instead of per token KL-divergence.

So even a fairly tight KL constraint is not enough to stop RL-finetuning from making the teacher's behavior worse and much simpler. When GPT3.5-turbo judged pairs of transcripts generated by the base model, the budget 10 agent, and budget 20 agent, the less optimized agent was usually judged ``better'' and ``more complex/unpredictable'', as seen in Table \ref{tab:comparison}.

\begin{table}[t]
\caption{Automated comparison of teacher behavior generated by base model, trained KL budget 10 policies, and trained KL budget 20 policies. The percentages refer to the fraction of the time that that agent ``won'' according to the comparator, with a 95\% confidence interval.}
\label{tab:comparison}
\begin{center}
\begin{tabular}{llll}
&\multicolumn{1}{c}{\bf 20 v. base}  &\multicolumn{1}{c}{\bf 10 v. base} &\multicolumn{1}{c}{\bf 20 v. 10}
\\ \hline \\
``Better''         &11.3\% v. 88.7\% $\pm$3.6\% &15.3\% v. 84.7\% $\pm$4.1\% &17.7\% v. 82.3\% $\pm$4.3\% \\
\makecell{``More complex/\\unpredictable''}             &4.0\% v. 96.0\% $\pm$2.2\% &29.0\% v. 71.0\% $\pm$5.1\% &14.3\% v. 85.7\% $\pm$4.0\% \\
\end{tabular}
\end{center}
\end{table}



\section{Pessimistic Bayesian base policy that asks for help}

\citet{cohen2022fully} developed a theoretical variant of Bayesian imitation that is ``pessimistic'', and using that as a base policy instead of a Bayesian imitator avoids the problem presented in Theorem \ref{thm:novel}. \citepos{cohen2022fully} (intractable) imitator is defined as follows, with $\M$, $\nu$, and $w$ as defined above.
First we define the set of semi-distributions with a posterior weight at least $\alpha$ times the sum of the posterior weights of semi-distributions that are at least as likely as it. And then we define the imitator.
\begin{definition}[Top set]
    Of all $\nu \in \M$, let $\nu^n_{x_{<t}}$ be the one with the $n$\textsuperscript{th} largest posterior weight $w(\nu | x_{<t})$, breaking ties arbitrarily. And for $\alpha \in (0, 1]$, let
\begin{equation*}
    \M^\alpha_{x_{<t}} := \{\nu^n_{x_{<t}} \in \M : w(\nu^n_{x_{<t}} | x_{<t}) \geq \alpha \sum_{m \leq n} w(\nu^m_{x_{<t}} | x_{<t})\}
\end{equation*}
\end{definition}
\begin{definition}[Pessimistic Bayesian imitator]
    \begin{equation*}
        \nu_\alpha(x | x_{<t}) := \min_{\nu' \in \M^\alpha_{x_{<t}}} \nu'(x | x_{<t})
    \end{equation*}
\end{definition}
Note that $\nu_\alpha$ is in general a probability \textit{semi}-distribution even if all $\nu$ are true probability distributions, since the $\nu_\alpha$ probabilities will sum to less than 1 if there is any disagreement among the $\nu \in \M^\alpha_{x_{<t}}$. \citet{cohen2022fully} study this distribution in the context of active imitation learning, and they examine the setting where the imitator asks for help with the remaining $\nu_\alpha$-probability.

Assume the data $x_{<k}$ is sampled from a true probability distribution $\mu$, and $\mu \in \M$. $\mu$ samples actions from the true demonstrator distribution. Then we have

\begin{theorem}[\citet{cohen2022fully} Theorem 2]
    For all $\delta > 0$, if $\alpha < \delta w(\mu)$, then with probability at least $1-\delta$, $\forall t \ \mu \in \M^\alpha_{x_{<t}}$.
\end{theorem}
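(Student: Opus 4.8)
The plan is to reduce the claim ``$\mu \in \M^\alpha_{x_{<t}}$ for all $t$'' to the much simpler statement that the posterior weight $w(\mu \mid x_{<t})$ stays above $\alpha$ for all $t$, and then to control that posterior weight with a martingale argument. For the reduction, suppose $\mu$ has rank $n$ at time $t$, i.e. $\mu = \nu^n_{x_{<t}}$. Membership in the top set requires $w(\mu \mid x_{<t}) \geq \alpha \sum_{m \leq n} w(\nu^m_{x_{<t}} \mid x_{<t})$, and since the sum on the right is a partial sum of the posterior weights (which total $1$) it is at most $1$. Hence $w(\mu \mid x_{<t}) > \alpha$ is already sufficient for membership, uniformly in $t$ and independent of $\mu$'s rank.

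It then remains to show $w(\mu\mid x_{<t}) > \alpha$ for all $t$ with probability at least $1-\delta$. Writing the posterior as $w(\mu \mid x_{<t}) = w(\mu)\,\mu(x_{<t})/\xi(x_{<t})$, this is equivalent to the likelihood ratio $Z_t := \xi(x_{<t})/\mu(x_{<t})$ never exceeding $w(\mu)/\alpha$. The key observation is that $Z_t$ is a nonnegative supermartingale under $\mu$: conditioning on $x_{<t}$ and using the chain rule $\xi(x_{<t}x_t) = \xi(x_{<t})\,\xi(x_t\mid x_{<t})$ (and likewise for $\mu$), one finds $\E_\mu[Z_{t+1}\mid x_{<t}] = Z_t \sum_{x_t}\xi(x_t\mid x_{<t}) \leq Z_t$, the inequality holding precisely because $\xi$ is a probability semi-distribution whose conditional masses sum to at most $1$. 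The sub-probability slack is exactly what turns the martingale into a supermartingale, and it only helps us. At $t=1$ the history is empty, so $Z_1 = 1$.

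With the supermartingale in hand, the finish is Ville's maximal inequality: for a nonnegative supermartingale with initial value $1$, $\Pr[\sup_t Z_t \geq \lambda] \leq 1/\lambda$. Taking $\lambda = w(\mu)/\alpha$ gives $\Pr[\sup_t Z_t \geq w(\mu)/\alpha] \leq \alpha/w(\mu) < \delta$, using the hypothesis $\alpha < \delta\, w(\mu)$. On the complementary event, which has probability at least $1-\delta$, we have $Z_t < w(\mu)/\alpha$, hence $w(\mu\mid x_{<t}) > \alpha$, for every $t$; by the reduction this yields $\mu \in \M^\alpha_{x_{<t}}$ for all $t$ simultaneously.

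The step I expect to be the crux is establishing and invoking the supermartingale structure correctly: one must recognize that the Bayes-mixture likelihood ratio $\xi/\mu$ (rather than the posterior itself) is the right object, verify the semi-distribution inequality at the conditioning step, and apply a \emph{maximal} (time-uniform) inequality instead of a pointwise Markov bound, since the conclusion must hold for all $t$ at once. Everything else---the reduction using $\sum_{m\le n} w(\nu^m_{x_{<t}}\mid x_{<t}) \le 1$ and the arithmetic $\alpha/w(\mu) < \delta$---is routine once this structure is in place.
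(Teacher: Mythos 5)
Your proof is correct. Note that the paper does not reprove this statement---it is imported verbatim from \citet{cohen2022fully}---and your argument (reduce membership in $\M^\alpha_{x_{<t}}$ to $w(\mu\mid x_{<t})>\alpha$ via the partial-sum bound, observe that $\xi(x_{<t})/\mu(x_{<t})$ is a nonnegative supermartingale under $\mu$ with initial value $1$, and apply Ville's maximal inequality with threshold $w(\mu)/\alpha$) is essentially the same argument used in that source, so there is nothing to flag.
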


And then assuming the high probability event that $\forall t \ \mu \in \M^\alpha_{x_{<t}}$,

\begin{theorem}[Tight KL constraint with approximate imitator]
    For any budget $b$,
    \begin{equation*}
        \{\pi : \kl_{x_{<2t}, m}(\pi || \nu_\alpha) \leq b\} \subseteq \{\pi : \kl_{x_{<2t}, m}(\pi || \mu) \leq b\}
    \end{equation*}
\end{theorem}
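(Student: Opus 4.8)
The plan is to reduce everything to a pointwise domination $\nu_\alpha(\cdot \mid x_{<t}) \le \mu(\cdot \mid x_{<t})$ and then propagate that inequality through the definition of the lifetime KL constraint. First I would invoke the assumed high-probability event that $\mu \in \M^\alpha_{x_{<t}}$ for every $t$. Since $\nu_\alpha(x \mid x_{<t})$ is \emph{by definition} the minimum of $\nu'(x \mid x_{<t})$ over all $\nu' \in \M^\alpha_{x_{<t}}$, and $\mu$ is one of the competitors in that minimum, we obtain immediately that $\nu_\alpha(x \mid x_{<t}) \le \mu(x \mid x_{<t})$ for every symbol $x$ and every history $x_{<t}$. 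This single observation is the crux; the rest is monotonicity bookkeeping.

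Next I would lift this one-step domination to the block of timesteps $k$ through $m$ that appears inside the constraint. Because the inequality holds factor-by-factor and every factor is nonnegative, $\prod_{t=k}^m \nu_\alpha(a_t \mid x_{<2t}) \le \prod_{t=k}^m \mu(a_t \mid x_{<2t})$ for any fixed action block $a_{k:m}$ and observation block $o_{k:m}$ (recall each $x_{<2t}$ is determined by these). Taking reciprocals reverses the inequality, multiplying through by the common nonnegative numerator $\prod_{t=k}^m \pi(a_t \mid x_{<2t})$ preserves it, and applying the increasing map $\log$ gives the termwise bound $\log \frac{\prod \pi}{\prod \nu_\alpha} \ge \log \frac{\prod \pi}{\prod \mu}$ for every $(a_{k:m}, o_{k:m})$. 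The point I would state carefully is that this comparison holds regardless of the \emph{sign} of either logarithm, because their difference is exactly $\log \frac{\prod \mu}{\prod \nu_\alpha} \ge 0$.

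Then I would assemble the full constraint in two stages. Fixing an observation block $o_{k:m}$, I weight the termwise inequality by $\prod_{t=k}^m \pi(a_t \mid x_{<2t}) \ge 0$ and sum over $a_{k:m}$, obtaining $\sum_{a_{k:m}} \prod \pi \log \frac{\prod \pi}{\prod \nu_\alpha} \ge \sum_{a_{k:m}} \prod \pi \log \frac{\prod \pi}{\prod \mu}$ for each $o_{k:m}$. Since this holds for every observation block, maximizing each side over $o_{k:m}$ preserves it (evaluate the larger left-hand side at the block that maximizes the right-hand side), yielding $\kl_{x_{<2t}, m}(\pi \| \nu_\alpha) \ge \kl_{x_{<2t}, m}(\pi \| \mu)$. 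The claimed set inclusion is then immediate: any $\pi$ with $\kl_{x_{<2t}, m}(\pi \| \nu_\alpha) \le b$ automatically satisfies $\kl_{x_{<2t}, m}(\pi \| \mu) \le b$.

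I expect no deep obstacle, since this is fundamentally a monotonicity argument, but the step requiring the most care is interchanging the $\max$ over observation blocks with the domination while remembering that $\nu_\alpha$ is only a \emph{semi}-distribution, so individual KL summands $\log \frac{\pi}{\nu_\alpha}$ need not be sign-controlled. I would therefore establish the domination at the level of the integrand first, before taking either the sum over actions or the max over observations, so that no cancellation or sign issue can undermine the inequality.
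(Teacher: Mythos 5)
Your proposal is correct and follows essentially the same route as the paper: the paper's proof is exactly the one-line observation that $\nu_\alpha(x \mid x_{<t}) = \min_{\nu' \in \M^\alpha_{x_{<t}}} \nu'(x \mid x_{<t}) \leq \mu(x \mid x_{<t})$ (using $\mu \in \M^\alpha_{x_{<t}}$), from which $\kl(\pi \| \nu_\alpha) \geq \kl(\pi \| \mu)$ and the set inclusion follow by the monotonicity you spell out. Your write-up just makes explicit the bookkeeping (products over the block, sum over actions, max over observations) that the paper leaves implicit.
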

\begin{proof}
    $\nu_\alpha(x | x_{<t}) = \min_{\nu' \in \M^\alpha_{x_{<t}}} \nu'(x | x_{<t}) \leq \mu(x | x_{<t})$,
    so $\kl(\pi || \nu_{\alpha}) \geq \kl(\pi || \mu)$.
\end{proof}
Therefore, for sufficiently small $\alpha$, $\kl$-regularization using the pessimistic Bayesian imitator guarantees regularization at least as strong as if using the trusted policy itself (the demonstrator) for regularization. Note, in particular, that if $\mu \in \M^\alpha_{x_{<t}}$, and $\mu(x | x_{<t}) = 0$, then $\nu_\alpha(x | x_{<t}) = 0$, so any policy with finite $\kl$-divergence from $\nu_\alpha$ will also assign zero probability to $x$.

The downside is that there may be no policy with small $\kl$ divergence to the semi-distribution $\nu_\alpha$. In an extreme case, $\nu_\alpha$ could assign zero probability to every outcome, and so any policy would have infinite $\kl$ divergence from it. Therefore, just as \citepos{cohen2022fully} imitation learner does not pick an action in some circumstances, we should allow an optimizer that is $\kl$-regularized to a pessimistic Bayesian imitator to refuse to pick an action if need be, making the optimizer a probability semi-distribution, rather than a true probability distribution. We can define the behavior of $U_m$ on unfinished sequences (resulting from no action choice somewhere along the line) however we like; if $U_m = 0$ for any such interrupted sequences, that would of course encourage the optimizer to pick an action whenever possible, subject to its $\kl$ constraint. Ideally, if human demonstrators are on hand, the optimizer should ask for help whenever it doesn't pick its own action. The ongoing potential need for human oversight may be a significant drawback, but \citet{cohen2022fully} give an encouraging result about the rate at which the ask-for-help probability goes to 0: the sum over infinite time of the cube of the ask-for-help probability is finite \citep[Thm 1]{cohen2022fully}. \citepos{cohen2022fully} agent is certainly not the only one that asks for help under uncertainty, but it is the only one that has been shown to satisfy $\nu_\alpha(x | x_{<t}) \leq \mu(x | x_{<t})$ with high probability---the critical result we use.

We contend that this is the way that $\kl$ regularization should be done, if we are forced to learn a mere approximation of a trusted policy that we would ideally regularize to. Regularizing to a full Bayesian posterior distribution is less robust, because the optimizer can seize on esoteric possibilities that a fully Bayesian imitator is not confident enough to categorically exclude. Roughly, KL regularization to a Bayesian imitator implements the principle, ``Don't do anything [that you know] I would never do'', whereas KL regularization to a pessimistic Bayesian imitator implements the principle, ``Don't do anything I might never do''.

\section{Conclusion and limitations}

The biggest limitation of our work is with our positive results rather than our negative ones: we cannot provide empirical findings about regularizing to a pessimistic Bayesian imitative base model, because it is an open question how to tractably approximate this approach to imitation. There are high-quality, off-the-shelf cross-entropy-minimizing imitators like Mixtral, but for tractable pessimistic Bayesian imitation, some new ideas may be needed. There certainly are not any state of the art language models trained in a way that reflects this idea. We hope this work provides motivation for a major industry effort to produce one. Using an ensemble of models to approximate $\M^\alpha_{x_{<t}}$ may be a step in the right direction, but we are reluctant to endorse this in settings where catastrophic outcomes are possible, unless there is a strong argument that the ensemble covers all the relevant modes of the posterior.

The second key limitation with our positive result is that any KL-regularization to avoid radically inhuman behavior could limit the potential of superhuman intelligence. This paper has no roadmap to A+ performance; it has a roadmap to non-catastrophic, decently-superhuman performance. And a final key limitation is that our agent sometimes has to ask for help instead of acting.

The main limitation of our negative results is they regard an unrealistic machine learning algorithm---Solomonoff Induction. However, Solomonoff induction is simply a formalism for unbelievably careful and open-minded probabilistic reasoning, and so if something goes wrong in that setting, we should be wary of something going wrong in increasingly careful and open-minded machine learning systems. Our empirical results do not directly validate the theory, since both the base model and the RL-finetuning process are too weak, but we validate core components of the theory: KL-regularized RL-finetuning will tend to amplify simple behaviors from an imitative base model rather than demonstrator-like behaviors. This helps explain the overoptimization phenomenon quantified by \citet{gao2023scaling}.

Excitingly, we offer theoretical results that could guide us to a solution to this problem: if \citepos{cohen2022fully} pessimistic online imitation learner could be faithfully approximated, and if the demonstrator(s) never attempt to do $X$, then KL regularization to such a policy could solve the problem of how to prevent superhuman planning agents from doing $X$.

\bibliography{cohen}

\begin{thebibliography}{47}
\providecommand{\natexlab}[1]{#1}
\providecommand{\url}[1]{\texttt{#1}}
\expandafter\ifx\csname urlstyle\endcsname\relax
  \providecommand{\doi}[1]{doi: #1}\else
  \providecommand{\doi}{doi: \begingroup \urlstyle{rm}\Url}\fi

\bibitem[Bai et~al.(2022)Bai, Jones, Ndousse, Askell, Chen, DasSarma, Drain, Fort, Ganguli, Henighan, et~al.]{bai2022training}
Yuntao Bai, Andy Jones, Kamal Ndousse, Amanda Askell, Anna Chen, Nova DasSarma, Dawn Drain, Stanislav Fort, Deep Ganguli, Tom Henighan, et~al.
\newblock Training a helpful and harmless assistant with reinforcement learning from human feedback.
\newblock \emph{arXiv preprint arXiv:2204.05862}, 2022.

\bibitem[Baker et~al.(2019)Baker, Kanitscheider, Markov, Wu, Powell, McGrew, and Mordatch]{baker2019emergent}
Bowen Baker, Ingmar Kanitscheider, Todor Markov, Yi~Wu, Glenn Powell, Bob McGrew, and Igor Mordatch.
\newblock Emergent tool use from multi-agent autocurricula.
\newblock In \emph{International Conference on Learning Representations}, 2019.

\bibitem[Brown et~al.(2020)Brown, Niekum, and Petrik]{brown2020bayesian}
Daniel Brown, Scott Niekum, and Marek Petrik.
\newblock Bayesian robust optimization for imitation learning.
\newblock \emph{Advances in Neural Information Processing Systems}, 33:\penalty0 2479--2491, 2020.

\bibitem[Brown et~al.(2018)Brown, Cui, and Niekum]{brown2018risk}
Daniel~S Brown, Yuchen Cui, and Scott Niekum.
\newblock Risk-aware active inverse reinforcement learning.
\newblock In \emph{Conference on Robot Learning}, pp.\  362--372. PMLR, 2018.

\bibitem[Carey(2019)]{carey2019how}
Ryan Carey.
\newblock How useful is quantilization for mitigating specification-gaming?
\newblock \emph{Safe Machine Learning workshop at {ICLR}}, 2019.

\bibitem[Catt et~al.(2023)Catt, Grau-Moya, Hutter, Aitchison, Genewein, Deletang, Wenliang, and Veness]{Hutter:23selfaixi}
Elliot Catt, Jordi Grau-Moya, Marcus Hutter, Matthew Aitchison, Tim Genewein, Gregoire Deletang, Li~Kevin Wenliang, and Joel Veness.
\newblock Self-predictive universal {AI}.
\newblock In \emph{37th Conf. on Neural Information Processing Systems ({NeurIPS'23})}, pp.\  1--18, New Orleans, USA, 2023.

\bibitem[Chaitin(1975)]{chaitin1975theory}
Gregory~J Chaitin.
\newblock A theory of program size formally identical to information theory.
\newblock \emph{Journal of the ACM (JACM)}, 22\penalty0 (3):\penalty0 329--340, 1975.

\bibitem[Chan et~al.(2023)Chan, Salganik, Markelius, Pang, Rajkumar, Krasheninnikov, Langosco, He, Duan, Carroll, et~al.]{chan2023harms}
Alan Chan, Rebecca Salganik, Alva Markelius, Chris Pang, Nitarshan Rajkumar, Dmitrii Krasheninnikov, Lauro Langosco, Zhonghao He, Yawen Duan, Micah Carroll, et~al.
\newblock Harms from increasingly agentic algorithmic systems.
\newblock In \emph{Proceedings of the 2023 ACM Conference on Fairness, Accountability, and Transparency}, pp.\  651--666, 2023.

\bibitem[Cohen \& Hutter(2020)Cohen and Hutter]{cohen2020pessimism}
Michael~K Cohen and Marcus Hutter.
\newblock Pessimism about unknown unknowns inspires conservatism.
\newblock In \emph{Conference on Learning Theory}, pp.\  1344--1373, 2020.

\bibitem[Cohen et~al.(2022{\natexlab{a}})Cohen, Hutter, and Nanda]{cohen2022fully}
Michael~K Cohen, Marcus Hutter, and Neel Nanda.
\newblock Fully general online imitation learning.
\newblock \emph{The Journal of Machine Learning Research}, 23\penalty0 (1):\penalty0 15066--15095, 2022{\natexlab{a}}.

\bibitem[Cohen et~al.(2022{\natexlab{b}})Cohen, Hutter, and Osborne]{cohen2022advanced}
Michael~K. Cohen, Marcus Hutter, and Michael~A. Osborne.
\newblock Advanced artificial agents intervene in the provision of reward.
\newblock \emph{AI magazine}, 43\penalty0 (3):\penalty0 282--293, 2022{\natexlab{b}}.

\bibitem[De~Rooij \& Gr{\"u}nwald(2011)De~Rooij and Gr{\"u}nwald]{de2011luckiness}
Steven De~Rooij and Peter~D Gr{\"u}nwald.
\newblock Luckiness and regret in minimum description length inference.
\newblock In \emph{Philosophy of Statistics}, pp.\  865--900. Elsevier, 2011.

\bibitem[Everitt et~al.(2017)Everitt, Krakovna, Orseau, Hutter, and Legg]{Hutter:17corruptrl}
Tom Everitt, Victoria Krakovna, Laurent Orseau, Marcus Hutter, and Shane Legg.
\newblock Reinforcement learning with a corrupted reward channel.
\newblock In \emph{Proc. 26th International Joint Conf. on Artificial Intelligence ({IJCAI'17})}, pp.\  4705--4713, Melbourne, Australia, 2017.
\newblock ISBN 978-0-9992411-0-3.
\newblock \doi{10.24963/ijcai.2017/656}.
\newblock URL \url{http://arxiv.org/abs/1705.08417}.

\bibitem[Gao et~al.(2023)Gao, Schulman, and Hilton]{gao2023scaling}
Leo Gao, John Schulman, and Jacob Hilton.
\newblock Scaling laws for reward model overoptimization.
\newblock In \emph{International Conference on Machine Learning}, pp.\  10835--10866. PMLR, 2023.

\bibitem[Grau-Moya et~al.(2024)Grau-Moya, Genewein, Hutter, Orseau, Deletang, Catt, Ruoss, Wenliang, Mattern, Aitchison, and Veness]{Hutter:24trainsol}
Jordi Grau-Moya, Tim Genewein, Marcus Hutter, Laurent Orseau, Gregoire Deletang, Elliot Catt, Anian Ruoss, Li~Kevin Wenliang, Christopher Mattern, Matthew Aitchison, and Joel Veness.
\newblock Learning universal predictors.
\newblock \emph{\href{http://arxiv.org/abs/2401.14953}{arXiv:2401.14953}}, 2024.

\bibitem[Hutter(2005)]{Hutter:04uaibook}
Marcus Hutter.
\newblock \emph{Universal Artificial Intelligence: Sequential Decisions based on Algorithmic Probability}.
\newblock Springer, Berlin, 2005.
\newblock ISBN 3-540-22139-5.
\newblock \doi{10.1007/b138233}.

\bibitem[Hutter et~al.(2024)Hutter, Quarel, and Catt]{Hutter:24uaibook2}
Marcus Hutter, David Quarel, and Elliot Catt.
\newblock \emph{An Introduction to Universal Artificial Intelligence}.
\newblock Chapman \& Hall/CRC Artificial Intelligence and Robotics Series. Taylor and Francis, 2024.
\newblock ISBN 9781032607023.
\newblock URL \url{http://www.hutter1.net/ai/uaibook2.htm}.

\bibitem[Jaques et~al.(2017)Jaques, Gu, Bahdanau, Hern{\'a}ndez-Lobato, Turner, and Eck]{jaques2017sequence}
Natasha Jaques, Shixiang Gu, Dzmitry Bahdanau, Jos{\'e}~Miguel Hern{\'a}ndez-Lobato, Richard~E Turner, and Douglas Eck.
\newblock Sequence tutor: Conservative fine-tuning of sequence generation models with kl-control.
\newblock In \emph{International Conference on Machine Learning}, pp.\  1645--1654. PMLR, 2017.

\bibitem[Jaques et~al.(2019)Jaques, Ghandeharioun, Shen, Ferguson, Lapedriza, Jones, Gu, and Picard]{jaques2019way}
Natasha Jaques, Asma Ghandeharioun, Judy~Hanwen Shen, Craig Ferguson, Agata Lapedriza, Noah Jones, Shixiang Gu, and Rosalind Picard.
\newblock Way off-policy batch deep reinforcement learning of implicit human preferences in dialog.
\newblock \emph{arXiv preprint arXiv:1907.00456}, 2019.

\bibitem[Jiang et~al.(2024)Jiang, Sablayrolles, Roux, Mensch, Savary, Bamford, Chaplot, Casas, Hanna, Bressand, et~al.]{jiang2024mixtral}
Albert~Q Jiang, Alexandre Sablayrolles, Antoine Roux, Arthur Mensch, Blanche Savary, Chris Bamford, Devendra~Singh Chaplot, Diego de~las Casas, Emma~Bou Hanna, Florian Bressand, et~al.
\newblock Mixtral of experts.
\newblock \emph{arXiv preprint arXiv:2401.04088}, 2024.

\bibitem[Kolmogorov(1963)]{kolmogorov1963tables}
Andrei~N Kolmogorov.
\newblock On tables of random numbers.
\newblock \emph{Sankhy{\=a}: The Indian Journal of Statistics, Series A}, pp.\  369--376, 1963.

\bibitem[Korbak et~al.(2022)Korbak, Perez, and Buckley]{korbak2022rl}
Tomasz Korbak, Ethan Perez, and Christopher Buckley.
\newblock {RL} with {KL} penalties is better viewed as {B}ayesian inference.
\newblock In \emph{Findings of the Association for Computational Linguistics: EMNLP 2022}, pp.\  1083--1091, 2022.

\bibitem[Kraft(1949)]{kraft1949device}
Leon~Gordon Kraft.
\newblock \emph{A device for quantizing, grouping, and coding amplitude-modulated pulses}.
\newblock PhD thesis, Massachusetts Institute of Technology, 1949.

\bibitem[Krakovna(2018)]{krakovna_2018}
Victoria Krakovna.
\newblock Specification gaming examples in {AI}.
\newblock \path{https://vkrakovna.wordpress.com/2018/04/02/specification-gaming-examples-in-ai/}, 2018.

\bibitem[Lee et~al.(2021)Lee, Smith, and Abbeel]{lee2021pebble}
Kimin Lee, Laura Smith, and Pieter Abbeel.
\newblock Pebble: Feedback-efficient interactive reinforcement learning via relabeling experience and unsupervised pre-training.
\newblock In \emph{38th International Conference on Machine Learning, ICML 2021}. International Machine Learning Society (IMLS), 2021.

\bibitem[Li et~al.(2008)Li, Vit{\'a}nyi, et~al.]{li2008introduction}
Ming Li, Paul Vit{\'a}nyi, et~al.
\newblock \emph{An introduction to Kolmogorov complexity and its applications}, volume~3.
\newblock Springer, 2008.

\bibitem[Menda et~al.(2019)Menda, Driggs-Campbell, and Kochenderfer]{menda2019ensembledagger}
Kunal Menda, Katherine Driggs-Campbell, and Mykel~J Kochenderfer.
\newblock Ensembledagger: A {B}ayesian approach to safe imitation learning.
\newblock In \emph{2019 IEEE/RSJ International Conference on Intelligent Robots and Systems (IROS)}, pp.\  5041--5048. IEEE, 2019.

\bibitem[Mishra et~al.(2017)Mishra, Abbeel, and Mordatch]{mishra2017prediction}
Nikhil Mishra, Pieter Abbeel, and Igor Mordatch.
\newblock Prediction and control with temporal segment models.
\newblock In \emph{International conference on machine learning}, pp.\  2459--2468. PMLR, 2017.

\bibitem[Moskovitz et~al.(2023)Moskovitz, Singh, Strouse, Sandholm, Salakhutdinov, Dragan, and McAleer]{moskovitz2023confronting}
Ted Moskovitz, Aaditya~K Singh, DJ~Strouse, Tuomas Sandholm, Ruslan Salakhutdinov, Anca~D Dragan, and Stephen McAleer.
\newblock Confronting reward model overoptimization with constrained {RLHF}.
\newblock \emph{arXiv preprint arXiv:2310.04373}, 2023.

\bibitem[Ouyang et~al.(2022)Ouyang, Wu, Jiang, Almeida, Wainwright, Mishkin, Zhang, Agarwal, Slama, Ray, et~al.]{ouyang2022training}
Long Ouyang, Jeffrey Wu, Xu~Jiang, Diogo Almeida, Carroll Wainwright, Pamela Mishkin, Chong Zhang, Sandhini Agarwal, Katarina Slama, Alex Ray, et~al.
\newblock Training language models to follow instructions with human feedback.
\newblock \emph{Advances in Neural Information Processing Systems}, 35:\penalty0 27730--27744, 2022.

\bibitem[Perez et~al.(2022)Perez, Huang, Song, Cai, Ring, Aslanides, Glaese, McAleese, and Irving]{perez2022red}
Ethan Perez, Saffron Huang, Francis Song, Trevor Cai, Roman Ring, John Aslanides, Amelia Glaese, Nat McAleese, and Geoffrey Irving.
\newblock Red teaming language models with language models.
\newblock \emph{arXiv preprint arXiv:2202.03286}, 2022.

\bibitem[Sanh et~al.(2019)Sanh, Debut, Chaumond, and Wolf]{sanh2019distilbert}
Victor Sanh, Lysandre Debut, Julien Chaumond, and Thomas Wolf.
\newblock Distil{BERT}, a distilled version of {BERT}: smaller, faster, cheaper and lighter.
\newblock \emph{arXiv preprint arXiv:1910.01108}, 2019.

\bibitem[Schmitt et~al.(2018)Schmitt, Hudson, Zidek, Osindero, Doersch, Czarnecki, Leibo, Kuttler, Zisserman, Simonyan, et~al.]{schmitt2018kickstarting}
Simon Schmitt, Jonathan~J Hudson, Augustin Zidek, Simon Osindero, Carl Doersch, Wojciech~M Czarnecki, Joel~Z Leibo, Heinrich Kuttler, Andrew Zisserman, Karen Simonyan, et~al.
\newblock Kickstarting deep reinforcement learning.
\newblock \emph{arXiv preprint arXiv:1803.03835}, 2018.

\bibitem[Schulman et~al.(2017)Schulman, Wolski, Dhariwal, Radford, and Klimov]{schulman2017proximal}
John Schulman, Filip Wolski, Prafulla Dhariwal, Alec Radford, and Oleg Klimov.
\newblock Proximal policy optimization algorithms.
\newblock \emph{arXiv preprint arXiv:1707.06347}, 2017.

\bibitem[Solomonoff(1960)]{solomonoff1960preliminary}
Ray~J Solomonoff.
\newblock A preliminary report on a general theory of inductive inference.
\newblock Citeseer, 1960.

\bibitem[Solomonoff(1964)]{solomonoff_1964}
Ray~J. Solomonoff.
\newblock A formal theory of inductive inference. part i.
\newblock \emph{Information and Control}, 7\penalty0 (1):\penalty0 1–22, 1964.
\newblock \doi{10.1016/s0019-9958(64)90131-7}.

\bibitem[Stiennon et~al.(2020)Stiennon, Ouyang, Wu, Ziegler, Lowe, Voss, Radford, Amodei, and Christiano]{stiennon2020learning}
Nisan Stiennon, Long Ouyang, Jeffrey Wu, Daniel Ziegler, Ryan Lowe, Chelsea Voss, Alec Radford, Dario Amodei, and Paul~F Christiano.
\newblock Learning to summarize with human feedback.
\newblock \emph{Advances in Neural Information Processing Systems}, 33:\penalty0 3008--3021, 2020.

\bibitem[Sutskever(2018)]{Sutskever_2018}
Ilya Sutskever.
\newblock Meta learning and self play, Jan 2018.
\newblock URL \url{https://www.youtube.com/watch?v=RvEwFvl-TrY&amp;t=196s}.

\bibitem[Sutskever(2023)]{Sutskever_2023}
Ilya Sutskever.
\newblock An observation on generalization, Aug 2023.
\newblock URL \url{https://simons.berkeley.edu/talks/ilya-sutskever-openai-2023-08-14}.

\bibitem[Taylor(2016)]{taylor2016quantilizers}
Jessica Taylor.
\newblock Quantilizers: A safer alternative to maximizers for limited optimization.
\newblock In \emph{AAAI Workshop: AI, Ethics, and Society}, 2016.

\bibitem[Turner et~al.(2021)Turner, Smith, Shah, Critch, and Tadepalli]{turner2021optimal}
Alex Turner, Logan Smith, Rohin Shah, Andrew Critch, and Prasad Tadepalli.
\newblock Optimal policies tend to seek power.
\newblock In M.~Ranzato, A.~Beygelzimer, Y.~Dauphin, P.S. Liang, and J.~Wortman Vaughan (eds.), \emph{Advances in Neural Information Processing Systems}, volume~34, pp.\  23063--23074. Curran Associates, Inc., 2021.

\bibitem[Vieillard et~al.(2020)Vieillard, Kozuno, Scherrer, Pietquin, Munos, and Geist]{vieillard2020leverage}
Nino Vieillard, Tadashi Kozuno, Bruno Scherrer, Olivier Pietquin, R{\'e}mi Munos, and Matthieu Geist.
\newblock Leverage the average: an analysis of kl regularization in reinforcement learning.
\newblock \emph{Advances in Neural Information Processing Systems}, 33:\penalty0 12163--12174, 2020.

\bibitem[Yang et~al.(2021)Yang, Rosca, Narasimhan, and Ramadge]{yang2021accelerating}
Tsung-Yen Yang, Justinian Rosca, Karthik Narasimhan, and Peter~J Ramadge.
\newblock Accelerating safe reinforcement learning with constraint-mismatched baseline policies.
\newblock In \emph{International Conference on Machine Learning}, pp.\  11795--11807. PMLR, 2021.

\bibitem[Zhang \& Cho(2017)Zhang and Cho]{zhang2017query}
Jiakai Zhang and Kyunghyun Cho.
\newblock Query-efficient imitation learning for end-to-end simulated driving.
\newblock In \emph{Proceedings of the AAAI Conference on Artificial Intelligence}, volume~31, 2017.

\bibitem[Zhuang \& Hadfield-Menell(2020)Zhuang and Hadfield-Menell]{zhuang2020consequences}
Simon Zhuang and Dylan Hadfield-Menell.
\newblock Consequences of misaligned {AI}.
\newblock In H.~Larochelle, M.~Ranzato, R.~Hadsell, M.F. Balcan, and H.~Lin (eds.), \emph{Advances in Neural Information Processing Systems}, volume~33, pp.\  15763--15773. Curran Associates, Inc., 2020.
\newblock URL \url{https://proceedings.neurips.cc/paper_files/paper/2020/file/b607ba543ad05417b8507ee86c54fcb7-Paper.pdf}.

\bibitem[Ziegler et~al.(2019)Ziegler, Stiennon, Wu, Brown, Radford, Amodei, Christiano, and Irving]{ziegler2019fine}
Daniel~M Ziegler, Nisan Stiennon, Jeffrey Wu, Tom~B Brown, Alec Radford, Dario Amodei, Paul Christiano, and Geoffrey Irving.
\newblock Fine-tuning language models from human preferences.
\newblock \emph{arXiv preprint arXiv:1909.08593}, 2019.

\bibitem[Zvonkin \& Levin(1970)Zvonkin and Levin]{zvonkin1970complexity}
Alexander~K Zvonkin and Leonid~A Levin.
\newblock The complexity of finite objects and the development of the concepts of information and randomness by means of the theory of algorithms.
\newblock \emph{Russian Mathematical Surveys}, 25\penalty0 (6):\penalty0 83, 1970.

\end{thebibliography}
\bibliographystyle{iclr2025_conference}

\appendix
\section{Solomonoff Induction} \label{sec:solom}

Solomonoff Induction \citep{solomonoff_1964} is Bayesian sequence prediction with a special model class $\M$ and a special prior $w$.\footnote{Solomonoff Induction has been defined in multiple ways which all share the key properties \citep{Hutter:04uaibook}. Our precise construction of Solomonoff Induction may be novel, but we believe this construction makes its properties most clear.} Let $P$ be the set of all programs which output an element of $\X$ and which accept two inputs: a finite string $\in \X^*$ and an infinite binary string $\in \{0, 1\}^\infty$. (Note that a program will not necessarily read every bit from the infinite binary string.) For each program $p \in P$, we define a semi-measure $\nu = f(p)$ as follows: let $\nu(x | x_{<t})$ be the probability that the probability that the program $p$ outputs $x$ when it receives $x_{<t}$ as an input, along with an infinite binary string where each bit is sampled from a Bernoulli$(1/2)$ distribution. Note that $\nu$ may not be a probability distribution, if there is are some inputs on which $p$ does not halt, but it will always be a probability semi-distribution. So let $\M = \{f(p) : p \in P\}$. Since $P$ is countable, so is $\M$. A notable feature of Solomonoff Induction is that $\M$ is equal to the set of all probability semi-distribution that are ``lower semi-computable''; this means that for all $x_{<t} \in \X^*$ and all $x \in \X$, there exists a program $p$, such that $\lim_{i \to \infty} p(i, x_{<t}, x) = \nu(x | x_{<t})$ and $p(i+1, x_{<t}, x) \geq p(i, x_{<t}, x)$. Replacing the $\geq$ with a $\leq$ gives the definition of upper semi-computable.
 
\begin{proposition}[Lower Semi-computability]
$\M$ is the set of all lower semi-computable semi-distributions over $\X$ given $x_{<t} \in \X^*$.
\end{proposition}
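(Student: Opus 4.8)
The plan is to prove the stated set equality by establishing two inclusions. Write $\M = \{f(p) : p \in P\}$ for the class of semi-measures realized by random-bit programs, and let $\mathcal{S}$ denote the class of lower semi-computable semi-distributions over $\X$ (given $x_{<t}$). First I would show $\M \subseteq \mathcal{S}$, that every random-bit program induces a lower semi-computable semi-measure, and then the substantive reverse inclusion $\mathcal{S} \subseteq \M$, that every lower semi-computable semi-measure is realized by some program in $P$.

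For $\M \subseteq \mathcal{S}$, fix $p \in P$, a context $x_{<t}$, and a symbol $x \in \X$. I would define $\hat\nu(i, x_{<t}, x)$ by simulating $p$ on input $x_{<t}$ for $i$ steps against every binary prefix of length at most $i$, and summing $2^{-|\sigma|}$ over the finitely many minimal prefixes $\sigma$ on which $p$ halts within the budget having read exactly the bits of $\sigma$ and output $x$. These minimal prefixes index disjoint cylinders, so $\hat\nu$ is a computable rational, nondecreasing in $i$. Since every infinite binary string on which $p$ outputs $x$ does so after reading finitely many bits in finitely many steps, continuity of measure gives $\hat\nu(i, x_{<t}, x) \nearrow f(p)(x \mid x_{<t})$, which exhibits $f(p)$ as lower semi-computable; disjointness of the output events also reconfirms $\sum_x f(p)(x \mid x_{<t}) \le 1$.

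The reverse inclusion is where the real work lies. Given $\nu \in \mathcal{S}$ with a program computing nondecreasing rational approximations $\hat\nu(i, x_{<t}, x) \nearrow \nu(x \mid x_{<t})$, I would first rewrite each target as a computable series of nonnegative rationals $\nu(x \mid x_{<t}) = \sum_{s \ge 1} q_s(x)$ with $q_s(x) := \hat\nu(s, x_{<t}, x) - \hat\nu(s-1, x_{<t}, x) \ge 0$; since $\nu$ is a semi-distribution, the total mass $\sum_x \sum_s q_s(x) \le 1$. I then treat each pair $(s, x)$ as an \emph{atom} of mass $q_s(x)$ and design $p$ to enumerate the atoms in a fixed computable order, assigning each newly discovered atom the half-open subinterval of $[0,1)$ immediately to the right of all previously placed atoms. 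The decisive feature of this greedy left-to-right packing is that an atom's interval, once placed, \emph{never moves}, which sidesteps the difficulty that the endpoints are only approximable from below. The program reads its random bits as a uniform real $r \in [0,1)$, narrowing $r$ to a shrinking dyadic interval, and outputs the label $x$ as soon as that dyadic interval is certified to lie inside some $x$-labeled atom interval (decidable, as the atom endpoints are rational). Because the $x$-labeled atoms occupy a set of Lebesgue measure $\sum_s q_s(x) = \nu(x \mid x_{<t})$, and the program outputs $x$ exactly when $r$ lands in the interior of that set, one obtains $f(p)(x \mid x_{<t}) = \nu(x \mid x_{<t})$, hence $\nu \in \M$.

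I expect the main obstacle to be this second construction, specifically arguing that the interleaving of ``read one more random bit'' with ``refine one more approximation and place one more atom'' produces the correct output on a measure-one set of random strings, and that the measure-zero boundary coincidences do not perturb the induced probabilities. The greedy packing is precisely what makes this tractable: because atoms only accrete rightward and are never relocated, each random real is eventually trapped in a single fixed atom interval, or else remains forever in the unallocated tail $[\sum_x \nu(x \mid x_{<t}),\, 1)$ corresponding to $p$ never halting, so the semi-measure identity follows from countable additivity rather than from any uniform control over the convergence rate of $\hat\nu$.
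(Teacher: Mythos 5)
Your proof is correct, and while your first inclusion is essentially the paper's argument (your sum of $2^{-|\sigma|}$ over minimal halting prefixes within budget $i$ computes the same monotone lower bound as the paper's count over all length-$i$ bit strings divided by $2^i$), your second inclusion takes a recognizably different route. The paper builds its sampler stage by stage: at stage $i$ it forms the increment $p'(i, x_{<t}, x) - p'(i-1, x_{<t}, x)$, renormalizes by the probability $1-y$ of not yet having halted, and samples from the resulting conditional semi-distribution, so correctness rests on a telescoping product of conditional probabilities. You instead pack the increments $q_s(x)$ as fixed, never-relocated rational subintervals of $[0,1)$, read the random bits as a uniform real, and output the label of whichever atom eventually traps the shrinking dyadic interval. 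What your version buys is that the identity $f(p)(x \mid x_{<t}) = \nu(x \mid x_{<t})$ falls out of countable additivity of Lebesgue measure, with no division by $1-y$ (a quantity that tends to, and in degenerate cases reaches, zero in the paper's scheme); what the paper's version buys is that it needs no bookkeeping of interval positions and makes the ``sample, and halt if successful'' structure explicit at each stage. You correctly isolate the two genuine subtleties --- dovetailing bit-reading with atom placement, and the measure-zero set of boundary reals --- and your observation that atoms never move is exactly what resolves the first, so I see no gap.
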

\begin{proof}
    First, we show that all $\nu \in \M$ are lower semi-computable. Let $p$ be the program that generates $\nu$. We define the behavior of program $p'$ on inputs $i$, $x_{<t}$, and $x$. On input $i$, let program $p'$ execute the following computations in sequence for all bit strings of length $i$: it simulates program $p$ with the input $x_{<t}$ and with the bit string of length $i$ in question, except if program $p$ would read more than $i$ bits from the random bit string, it halts instead, and if it would run for more than $i$ computation steps, it halts instead. For each of those $2^i$ computations, program $p'$ checks whether $x$ was output, keeps count of how many times it was, divides by $2^i$, and outputs this number. It is elementary to show that $\lim_{i \to \infty} p'(i, x_{<t}, x) = \nu(x | x_{<t})$ and that $p'(i+1, x_{<t}, x) \geq p'(i, x_{<t}, x)$.

    Next, we show that all lower semi-computable semi-distributions appear in $\M$. Let $p'$ be the program which is witness to the semi-distribution $\nu$'s lower semi-computability. On input $x_{<t}$, let program $p$ proceed as follows. Starting with $i = 1$, program $p$ executes $p'(i, x_{<t}, x)$ for all $x \in \X$, sequentially. This produces a semi-distribution over $\X$. Then, using random bits from its input bit string, it samples from that semi-distribution, and halts if successfully samples. Now, the following repeats forever. If no sample was selected (because the semi-distribution summed to $y < 1$), the program increments $i$, and it executes $p'(i, x_{<t}, x)$ for all $x \in \X$, sequentially. Then for each $x$, it computes $(p'(i, x_{<t}, x) - p'(i-1, x_{<t}, x))/(1-y)$, which is a semi-distribution. Using random bits from its input bit string, it samples from that semi-distribution, and halts if it successfully samples. [End of loop]. Again, it is elementary to show that $p$ samples from the semi-distribution defined by $p'$, and since this program has the right input/output behavior, it appears in $P$.
\end{proof}

Now we specify the prior weight function $w$. Consider a universal binary programming language $\mathcal{L}$, which is a ``prefix-free'' subset of $\{0, 1\}^*$. Prefix-free means that you can tell when a program has ended: if the bits composing $x \in \mathcal{L}$ match the initial bits of $y \in \{0, 1\}^*$, then $y \notin \mathcal{L}$. Such a language is still capable of encoding countably many different programs. For convenience, we also require that for any infinite binary string, $\mathcal{L}$ contains an element which is a prefix of that string, making $\mathcal{L}$ ``complete''. We define a prior probability distribution over program strings $\mathcal{L}$, which results in the same prior probability distribution over programs, which results in the same prior probability distribution over semi-computable semi-distributions $\M$. For $s \in \mathcal{L}$, this prior probability $w(s) = 2^{-\ell(s)}$, where $\ell$ is the length of the string. Because $\mathcal{L}$ is prefix-free and complete, $\sum_{s \in \mathcal{L}} w(s) = 1$ \citep{kraft1949device,de2011luckiness}. This completes the definition of Solomonoff Induction; it is sequence prediction using the Bayes mixture semi-distribution $\xi$, with the above definitions of $\M$ and $w$.

\begin{proposition}[Any-time Computability of $\xi$] \label{prop:anytimexi}
    $\xi(x | x_{<t})$ is any-time computable: there exists a program which, accepting an argument $i$, computes $\hat{\xi}_i(x | x_{<t})$, having the property that $\lim_{i \to \infty} \hat{\xi}_i(x | x_{<t}) = \xi(x | x_{<t})$. Moreover, $(\hat{\xi}_i)_{i \in \mathbb{N}}$ can be constructed so that each one is a probability semi-distribution.
\end{proposition}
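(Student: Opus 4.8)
The plan is to reduce the conditional to a ratio of joint probabilities and then approximate that ratio by a clipped quotient of two monotone lower approximations. Recall that $\xi(x \mid x_{<t}) = \xi(x_{<t}x)/\xi(x_{<t})$, where $x_{<t}x$ denotes the string $x_{<t}$ with $x$ appended and $\xi(z) = \sum_{\nu \in \M} w(\nu)\nu(z)$ for any finite string $z$. This follows from $w(\nu \mid x_{<t}) = w(\nu)\nu(x_{<t})/\xi(x_{<t})$ together with $\nu(x_{<t})\nu(x \mid x_{<t}) = \nu(x_{<t}x)$. So it suffices to approximate the two joint quantities $\xi(x_{<t}x)$ and $\xi(x_{<t})$ and then divide.

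First I would establish that each joint $\xi(z)$ is lower semi-computable, i.e.\ that there is a computable non-decreasing sequence $\hat a_i(z) \uparrow \xi(z)$. For a single $\nu = f(p) \in \M$, the frequency-counting program $p'$ from the proof of the Lower Semi-computability proposition already supplies non-decreasing lower approximations of every conditional $\nu(x_j \mid x_{<j})$, whose finite product is a non-decreasing lower approximation of $\nu(z)$. Since the weights $w(s) = 2^{-\ell(s)}$ are computable and $\M$ is effectively enumerated through the prefix-free language $\mathcal{L}$, I would dovetail: at stage $i$ include the first $i$ programs and approximate each term to accuracy level $i$, then take the weighted partial sum. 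By monotone convergence this increases to $\xi(z)$.

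Next I would define the raw quotient $r_i(x) := \hat a_i(x_{<t}x)/\hat a_i(x_{<t})$ once the denominator is positive (setting $\hat\xi_i \equiv 0$ beforehand, which is a valid semi-distribution). Because the numerator increases to $\xi(x_{<t}x)$ and the denominator increases to $\xi(x_{<t})$, which is strictly positive --- a uniform-output program lies in $\M$ with positive weight, forcing $\xi(x_{<t}) \geq w(\mathrm{unif})\lvert \X \rvert^{-(t-1)} > 0$ --- the quotient converges to $\xi(x \mid x_{<t})$. Finally, to force the semi-distribution property I would renormalize by a clipping factor, setting $\hat\xi_i(x \mid x_{<t}) := r_i(x)/\max(1, \sum_{x' \in \X} r_i(x'))$. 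This guarantees $\sum_{x \in \X} \hat\xi_i(x \mid x_{<t}) \leq 1$ for every $i$, and since $\sum_{x'} r_i(x') \to \sum_{x'} \xi(x' \mid x_{<t}) \leq 1$, the clipping factor tends to $1$, so the limit $\xi(x \mid x_{<t})$ is preserved.

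The main obstacle is precisely this last point: producing bona fide semi-distributions at every finite stage. One cannot simply compute the posterior weights $w(\nu \mid x_{<t})$ and mix the (already semi-distribution) Monte-Carlo approximants of each $\nu$, because those weights require dividing by the uncomputable normalizer $\xi(x_{<t})$; and the naive quotient $r_i$ of independent lower approximations can overshoot, since the lower bound on the denominator may lag behind the summed lower bounds on the numerators, giving $\sum_x r_i(x) > 1$. Routing everything through the joint and applying the convergence-preserving clip is what reconciles approximability with the semi-distribution constraint.
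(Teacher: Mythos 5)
Your proof is correct, and its skeleton is the paper's: both write $\xi(x\mid x_{<t})$ as the quotient of the same two lower semi-computable quantities (your joints $\xi(x_{<t}x)$ and $\xi(x_{<t})$ are exactly the paper's numerator $\sum_\nu w(\nu)\nu(x_{<t})\nu(x\mid x_{<t})$ and denominator $\sum_\nu w(\nu)\nu(x_{<t})$), and both note that the quotient of convergent computable estimates converges even though it is not monotone, which is why $\xi$ is merely limit-computable rather than lower semi-computable. The genuine difference is in how the ``moreover'' clause is secured. The paper keeps the mixture structure visible: at each stage it normalizes the coefficients $w(\nu)\hat\nu_i(x_{<t})$ so they sum to exactly one, making the estimate a convex combination of under-approximations of semi-distributions; a convex combination of semi-distributions is a semi-distribution, so no overshoot ever occurs and no repair is needed. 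You instead collapse everything into the two joints, concede that the raw quotient $r_i$ can overshoot (the denominator's lower bound may lag the numerators'), and fix it post hoc with the clip $r_i(x)/\max\bigl(1,\sum_{x'}r_i(x')\bigr)$, then verify the clipping factor tends to $1$. Your device is more generic---it converts \emph{any} convergent nonnegative approximation whose limit sums to at most one into a sequence of semi-distributions---at the price of an extra limit argument; the paper's device exploits the specific convex structure and produces valid semi-distributions by construction. Your explicit observation that $\xi(x_{<t})>0$ (via a uniform-sampling program in $\M$) is a detail the paper leaves implicit but that both arguments need for the quotient to be well-defined; including it is a small improvement.
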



\begin{proof}
    $\xi(x | x_{<t}) = \sum_{\nu \in \M} w(\nu | x_{<t}) \nu(x | x_{<t}) = \frac{\sum_{\nu \in \M} w(\nu) \nu(x_{<t}) \nu(x | x_{<t})}{\sum_{\nu \in \M} w(\nu) \nu(x_{<t})}$. All $\nu(x | x_{<t})$ and $\nu(x_{<t})$ are both lower semi-computable, so using a sequence of computable estimators for each term gives a sequence of computable estimators that approaches the true value. (Note that the estimates are not monotonically increasing because there are lower semi-computable terms in the denominator, so $\xi$ is not lower semi-computable itself).

    For fixed estimates of $\nu(x | x_{<t})$ and $\nu(x_{<t})$, we have a linear combination over various $\nu$'s of $\nu(x | x_{<t})$, with the coefficients summing to one. And because each $\nu(x | x_{<t})$ is lower semi-computable, the estimate will be less than the true value. Therefore, since $\nu(x | x_{<t})$ is a probability semi-distribution, the estimate will be as well, so $\xi$ can be approximated by a sequence of probability semi-distributions.
\end{proof}

\section{Optimizer Regularization}

We now define optimizers, and what it means for an optimizer to be regularized to a probability semi-distribution. First, we show that the value of a policy is lower-semicomputable. Then we show that such optimizers exist.

\begin{proposition}[Lower semi-computable value] \label{prop:lscvalue}
    If the policy and environment $\pi$ and $\nu$ are lower semi-computable probability semi-distributions, $V^\pi_{\nu, U_m}$ is lower semi-computable.
\end{proposition}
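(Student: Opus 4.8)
The plan is to express the value as a finite algebraic combination---built out of nonnegative additions and multiplications only---of the conditional probabilities $\pi(a|\cdot)$, $\nu(o|\cdot)$, and the utility $U_m$, and then to invoke the fact that the nonnegative lower semi-computable reals are closed under finite sums and finite products. First I would unfold Definition 1: because the horizon runs only from $t$ to $m$ and $\X$ is finite, the nested expectations collapse into a finite sum,
\[
V^\pi_{\nu, U_m}(x_{<2t-1}) = \sum_{a_{t:m}, o_{t:m}} \left(\prod_{s=t}^{m} \pi(a_s | a_1 o_1 \cdots a_{s-1} o_{s-1})\, \nu(o_s | a_1 o_1 \cdots a_{s-1} o_{s-1} a_s)\right) U_m(a_1 o_1 \cdots a_m o_m),
\]
a sum of finitely many terms, each of which is a product of finitely many factors. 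Every factor here is a specific value of a lower semi-computable function evaluated at a finite, computably-describable argument, and $U_m$ is computable (its $K(U_m)$ being finite, as used in Theorem~\ref{thm:novel}), hence in particular lower semi-computable; all factors lie in $[0,1]$ and so are nonnegative.

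Next I would construct the approximating sequence explicitly. The witnesses to the lower semi-computability of $\pi$ and $\nu$ supply uniformly computable, monotonically increasing approximations $\hat\pi_i(a|\cdot) \nearrow \pi(a|\cdot)$ and $\hat\nu_i(o|\cdot) \nearrow \nu(o|\cdot)$, and computability of $U_m$ gives $\hat U_i \nearrow U_m$. Substituting these into the displayed formula defines $\hat V_i(x_{<2t-1})$, which is computable given $i$ and the finite input $x_{<2t-1}$. I would then verify that $\hat V_i \nearrow V^\pi_{\nu, U_m}$: since all quantities are nonnegative, a product of monotonically increasing nonnegative approximations is itself monotonically increasing and converges to the product of the limits, and by induction this extends to each finite product; summing finitely many such monotone sequences preserves both monotonicity and convergence. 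This exhibits $V^\pi_{\nu, U_m}$ as the monotone limit of a computable sequence, i.e.\ as lower semi-computable.

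The main obstacle---really the only delicate point---is that lower semi-computability is fragile: it is \emph{not} preserved under subtraction or division, so the argument works only because the value is a genuine expectation of a nonnegative utility against nonnegative probability weights, involving no cancellation. The proof therefore hinges on confirming that the entire expression stays within the additive-multiplicative, nonnegative fragment, and that the bookkeeping (enumerating the finitely many action-observation continuations and combining the per-factor approximations) can be done uniformly so that $\hat V_i$ is a single program in $i$. Nonnegativity is exactly what licenses passing from monotone approximations of the factors to a monotone approximation of each product, so I expect the bulk of the care to go into stating that closure property cleanly rather than into any nontrivial computation.
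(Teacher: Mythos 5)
Your proof is correct, but it takes a different route from the paper's. You unfold the nested expectations into a finite sum (over the $|\X|^{2(m-t+1)}$ action--observation continuations) of finite products of nonnegative lower semi-computable factors, and then invoke closure of the nonnegative lower semi-computable reals under finite sums and products, with the monotone approximations combined termwise. The paper instead represents the joint process $\rho$ (interleaving $\pi$ on odd positions and $\nu$ on even ones) by a program that samples from it by consuming random input bits, and runs a dovetailing tree search over finite bit strings: whenever a bit string is found on which the sampler emits a full length-$2m$ trajectory, the estimate is incremented by $2^{-\ell(\textrm{bit string})}$ times the utility, and monotonicity again follows from $U_m \geq 0$. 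Your argument is the more elementary and self-contained one, and it makes transparent exactly which closure property is doing the work (and why nonnegativity is essential, as you note). The paper's heavier construction is not gratuitous, though: the same DTS-on-the-sampler machinery is reused immediately in the proof that optimizers exist (Proposition~\ref{prop:opt}), where one needs running lower bounds on the individual transition probabilities in order to track which actions attain the maxima in Equation~\ref{eqn:optvalue}; your termwise-product estimates would serve there too, but the paper gets that infrastructure for free from this proof. One small point of care in your version: you should make explicit that the single program witnessing lower semi-computability of $\pi$ (respectively $\nu$) is uniform in its string arguments, so that $\hat V_i$ is computed by one program taking $i$ and $x_{<2t-1}$ as inputs; the paper's definition of lower semi-computability does supply this uniformity.
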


\begin{proof}
    We begin by defining dovetailing tree search (DTS), for evaluating the outputs of a tree of different computations, or more precisely, computations which, when given a finite binary string as input have three possible outcomes: halt, do not halt, or require additional bit. DTS gives an any-time algorithm that produces a list of the halting binary strings with their corresponding outputs, and every such binary string and output will eventually be added to this list.

    DTS maintains a queue of pairs (computation state, binary string), starting with just (the initial computation state, the empty binary string). It cycles through the queue, executing one computation step per computation state, and if the computation ever requires an additional bit, it adds a copy of (computation state, binary string) to the queue, and adds a 0 to the end of one string, and a 1 to the end of the other. If any computation reaches a halt state, it is removed from the queue, and the associated binary string and the associated output is added to the list of outputs.

    Collectively, $\nu$ and $\pi$ define a lower semi-computable semi-distribution, where $\nu$ is used for the even characters, and $\pi$ is used for the odd ones. Call this probability semi-distribution $\rho$, and recall the construction of the lower semi-computable semi-distributions defined in $\M$. To have one of the programs in $\M$ sample a long sequence of characters, every time the program would output a character, add that character to the input, and continue on that input. With such a program for sampling sequences from $\rho$ by reading random bits from an input bit string, we can compute $V^\pi_{\nu, U_m}$ by running DTS on the bit string. Each time DTS outputs a bit string for which $\rho$ outputs a sequence in $\X^{2m}$, we add to the estimate of the value the probability of that bit string ($=2^{-\ell(\textrm{bit string})}$) times the utility of the sequence in $\X^{2m}$. This approaches the true value as DTS runs for longer, and the value never decreases because $U_m$ is non-negative.
\end{proof}

An optimizer is an any-time program for computing actions (perhaps stochastically) whose value approaches the optimal value, as it runs for longer. The optimal value takes the following form:
\begin{multline} \label{eqn:optvalue}
    V^*_{\nu, U_m}(x_{<2t-1}) = \max_{a_t \in \X} \E_{o_t \sim \nu(\cdot | a_1o_1...a_t)} \max_{a_{t+1} \in \X} \E_{o_{t+1} \sim \nu(\cdot | a_1o_1...a_{t+1})} ... \\
    \max_{a_{m} \in \X} \E_{o_{m} \sim \nu(\cdot | a_1o_1...a_{m})} U_m(a_1o_1...a_mo_m)
\end{multline}

\begin{definition}[Optimizer]
    For an environment $\nu$, a utility function $U_m$, and a computation quantity $c$, an optimizer is a computable policy $\pi_{c, \nu, U_m}$ for which $\lim_{c \to \infty} V^{\pi_{c, \nu, U_m}}_{\nu, U_m} = V^*_{\nu, U_m}$.
\end{definition}

\begin{proposition}[Optimizers exist] \label{prop:opt}
    For any lower semi-computable semi-distribution $\nu$ (the environment), any $m$, and any computable utility function $U_m$, there exists an optimizer.
\end{proposition}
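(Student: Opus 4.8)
The plan is to realize the optimizer directly from the expectimax expression in \eqref{eqn:optvalue}, exploiting that the horizon is finite and the alphabet $\X$ is finite, so the relevant decision tree has only finitely many nodes. First I would establish that the optimal value, and more generally the optimal action-value $Q^*_{\nu,U_m}(a_t \mid x_{<2t-1})$ obtained by fixing the first action to $a_t$ and then following \eqref{eqn:optvalue}, is lower semi-computable. This follows by structural induction on the depth of the tree, paralleling Proposition \ref{prop:lscvalue}: the leaf $U_m$ is computable; each expectation $\E_{o \sim \nu(\cdot \mid \cdot)}$ is a finite sum over $o \in \X$ of products $\nu(o \mid \cdot)\cdot(\text{subtree value})$ of nonnegative lower semi-computable functions, hence lower semi-computable; and each $\max_{a \in \X}$ is a finite maximum of lower semi-computable functions, hence lower semi-computable. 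Concretely, replacing every conditional probability of $\nu$ by its stage-$c$ lower approximation and expanding the whole finite tree yields computable estimates $\hat{Q}_c(a \mid h)$ that are nondecreasing in $c$ and increase to $Q^*_{\nu,U_m}(a \mid h)$ from below.

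Next I would define the optimizer $\pi_{c,\nu,U_m}$ as the deterministic policy that, at any history $x_{<2t-1}$, computes $\hat{Q}_c(a \mid x_{<2t-1})$ for every $a \in \X$ and outputs $\argmax_{a \in \X} \hat{Q}_c(a \mid x_{<2t-1})$, breaking ties lexicographically. This is a computable policy for each fixed $c$, since it performs only a bounded amount of computation given (programs for) $\nu$ and $U_m$.

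Finally I would prove convergence of the value. Because the horizon $m$ and the alphabet $\X$ are finite, only finitely many histories $h$ are reachable within the tree rooted at the starting history. For each such $h$, let $A^*(h)$ be the set of $Q^*$-maximizing actions and set $\gamma(h) := V^*_{\nu,U_m}(h) - \max_{a \notin A^*(h)} Q^*_{\nu,U_m}(a \mid h) > 0$, its action gap (with $\gamma(h) = \infty$ if every action is optimal), and put $\gamma := \min_h \gamma(h) > 0$. Since $\hat{Q}_c(a^* \mid h)$ increases to $V^*_{\nu,U_m}(h)$ for each optimal $a^*$, while $\hat{Q}_c(a \mid h) \le Q^*_{\nu,U_m}(a \mid h) \le V^*_{\nu,U_m}(h) - \gamma$ for each strictly suboptimal $a$, there is a single threshold $c_0$ (the maximum over the finitely many histories) such that for all $c \ge c_0$ the $\argmax$ at every history $h$ lands in $A^*(h)$. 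Hence for $c \ge c_0$ the policy $\pi_{c,\nu,U_m}$ selects a $Q^*$-optimal action at every node and is therefore an optimal policy, giving $V^{\pi_{c,\nu,U_m}}_{\nu,U_m} = V^*_{\nu,U_m}$; in particular $\lim_{c \to \infty} V^{\pi_{c,\nu,U_m}}_{\nu,U_m} = V^*_{\nu,U_m}$, as required.

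The main obstacle is precisely the step about the $\argmax$ over lower estimates: because the estimates approach $Q^*$ only from below and we never certify how close we are, we cannot guarantee optimality at any finite $c$. The resolution, and the crux of the argument, is that finite branching and finite horizon make the convergence uniform over a finite tree, so a strictly positive action gap $\gamma$ exists and a single budget $c_0$ suffices to suppress every strictly suboptimal action at every node simultaneously. The tie-breaking rule is immaterial to the value, since tied actions share the same $Q^*$.
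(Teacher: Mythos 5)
Your proof is correct, and it shares the paper's high-level strategy: build monotonically increasing computable lower approximations of the expectimax expression in \eqref{eqn:optvalue} and output the action achieving the outermost max. Two details differ, and both are worth noting. First, you obtain the lower approximations by structural induction on the finite, finite-branching expectimax tree, plugging stage-$c$ lower approximations of $\nu$'s conditionals directly into the finite sums, products, and maxima; the paper instead routes everything through its dovetailing tree search over the random-bit sampling program from Proposition~\ref{prop:lscvalue}. These are interchangeable here, though your version is more elementary and self-contained. Second, and more substantively, your convergence argument is stronger and more careful than the paper's. The paper simply asserts that ``the value of the action selected approaches the optimal value (even if the actual choice of action oscillates infinitely often),'' which glosses over the fact that $V^{\pi_c}$ depends on the policy's choices at \emph{every} node, not just the root, and that a lower estimate of $Q^*$ for the selected action does not by itself lower-bound the value of a policy that may continue suboptimally. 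Your action-gap argument --- a strictly positive minimum gap $\gamma$ over the finitely many reachable histories, hence a single finite $c_0$ beyond which the argmax is exactly optimal everywhere --- closes this gap cleanly and even yields exact optimality at finite $c$ rather than only in the limit. The one thing to keep in mind is that $c_0$ depends on the starting history, so the conclusion is the pointwise limit $\lim_{c\to\infty} V^{\pi_{c,\nu,U_m}}_{\nu,U_m} = V^*_{\nu,U_m}$ required by the definition, not a uniform bound over all histories; as stated, that is exactly what the proposition asks for.
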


\begin{proof}
    We can construct the optimizer using the algorithm presented in the proof of Proposition \ref{prop:lscvalue}, with $\pi$ being the uniform random policy. The optimizer can then estimate Equation \ref{eqn:optvalue} using the outputs of DTS for lower bounds on the probabilities in underlying the expectations. The optimizer then keeps track of the actions that are responsible for achieving the maxima in Equation \ref{eqn:optvalue}, and whenever ``time is up'' and it has to produce an output, it outputs the action which maximizes the first $\max$ in Equation \ref{eqn:optvalue}.

    As the optimizer runs for longer, the lower-bounds on the expectations approach the truth, and the value of the action selected approaches the optimal value (even if the actual choice of action oscillates infinitely often).
\end{proof}

For the setting where odd characters are actions, originating from a different process than the even characters, observations, we redefine $\xi$ as follows \citep{Hutter:23selfaixi}. We have two prior distributions over $\nu \in \M$, $w_a$ and $w_o$, and these are both identical to the prior distribution defined before. But the posteriors are different: $w_a(\nu | x_{<t}) :\propto w_a(\nu)\prod_{k \in \{1, 3, 5, ... \} \cup [t-1]} \nu(x_k | x_{<k})$ and $w_o(\nu | x_{<t}) :\propto w_a(\nu)\prod_{k \in \{2, 4, 6, ... \} \cup [t-1]} \nu(x_k | x_{<k})$. And for odd (or even) $t$, $\xi(x | x_{<t}) = \sum_{\nu \in \M} {w_a \atop \textrm{or \ } w_o}(\nu | x_{<t}) \nu(x | x_{<t})$.

This is equivalent to a change in programming language underlying the original definition of $\xi$, and since this language was unspecified, our previous results apply. The programming language now expects a program to be composed of two component programs concatenated together, and the compiler of the program executes the first component program if the input has odd length, and if executes the second component program if the input has even length. We omit a proof that this (re)formulation of $\xi$ is equivalent to what we describe above.

\begin{proposition}[$\xi$-optimizer exists] \label{prop:xiopt}
    For any $m$ and any computable utility function $U_m$, there exists a $\xi$-optimizer.
\end{proposition}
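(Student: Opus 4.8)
The plan is to adapt the optimizer construction from the proof of Proposition~\ref{prop:opt}, but replace its use of lower semi-computability with the any-time computability of $\xi$ from Proposition~\ref{prop:anytimexi}. I cannot simply invoke Proposition~\ref{prop:opt} with $\nu = \xi$, because (as noted after the proof of Proposition~\ref{prop:anytimexi}) $\xi$ is \emph{not} lower semi-computable: the lower semi-computable terms in the denominator spoil monotone convergence from below. So the substance of the proof is to verify that two-sided, non-monotone approximation of $\xi$ is still enough to drive a greedy optimizer.

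First I would observe that, for a fixed starting history $x_{<2t-1}$, the optimal value $V^*_{\xi, U_m}(x_{<2t-1})$ of Equation~\ref{eqn:optvalue} is a \emph{finite} nested alternation of $\max_{a_s \in \X}$ and $\E_{o_s \sim \xi(\cdot \mid \ldots)}$, of depth $m - t + 1$, terminating in the computable bounded utility $U_m \in [0,1]$. Since $\X$ is finite and the horizon is finite, this expression references only finitely many conditional probabilities $\xi(o \mid \cdot)$, and it is a continuous (piecewise-multilinear) function of those finitely many numbers. Hence $V^*_{\xi, U_m}$ depends continuously on the relevant values of $\xi$, and each such value is any-time computable by Proposition~\ref{prop:anytimexi}.

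Next I would build the optimizer. With computation budget $c$, it forms estimates $\hat{\xi}_c$ of all the finitely many needed conditionals and evaluates the induced approximation $\hat{Q}_c(a)$ to the true $Q$-value $Q^*(a)$ of each candidate first action $a \in \X$ (the quantity inside the outer $\max$ of Equation~\ref{eqn:optvalue}); by the continuity observation, $\hat{Q}_c(a) \to Q^*(a)$ for every $a$, and since $\X$ is finite this convergence is uniform in $a$. The optimizer outputs a greedy action $\hat{a}_c \in \argmax_{a} \hat{Q}_c(a)$, re-planning with budget $c$ at each subsequent history; this policy is computable because $\hat{\xi}_c$ and $U_m$ are computable and all optimization is over finite sets. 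A standard $\argmax$-of-approximation bound gives $Q^*(\hat{a}_c) \geq V^*_{\xi, U_m} - 2\max_{a} |\hat{Q}_c(a) - Q^*(a)|$, which tends to $V^*_{\xi, U_m}$ at every history; pushing this per-step near-optimality through the usual finite-horizon greedy value bound yields $\lim_{c \to \infty} V^{\pi_c}_{\xi, U_m} = V^*_{\xi, U_m}$, as required.

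The main obstacle is exactly the loss of the monotone lower bounds that Proposition~\ref{prop:opt} exploited: its environment $\nu$ was lower semi-computable, so its value estimates increased toward the truth, whereas $\xi$ is only any-time computable. I would stress that this is not actually needed here, since the optimizer definition requires only the \emph{limit} of the executed policy's value; the reused DTS/uniform-policy machinery of Propositions~\ref{prop:lscvalue} and~\ref{prop:opt} goes through verbatim once $\xi$'s any-time estimator is substituted for the lower-bound estimator, because the finite-horizon value tree collapses everything to finitely many conditionals whose two-sided approximation converges.
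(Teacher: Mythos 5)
Your proof is correct, but it takes a genuinely different route from the paper's. The paper's proof is a one-line reduction: it defines the \emph{unnormalized} mixture $\xi^{\sm}(o_t \mid a_{\leq t}o_{<t}) = \sum_{\nu \in \M} w_o(\nu) \left[\prod_{k < t} \nu(o_k \mid a_{\leq k}o_{<k}) \right] \nu(o_t \mid a_{\leq t}o_{<t})$, observes that $V^\pi_{\xi^{\sm}, U_m}$ and $V^\pi_{\xi, U_m}$ differ only by a policy-independent normalizing constant (so they induce the same ordering over policies), notes that $\xi^{\sm}$ \emph{is} lower semi-computable since the offending denominator has been dropped, and then invokes Proposition~\ref{prop:opt} directly. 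You instead work with the normalized $\xi$ via its two-sided any-time approximation (Proposition~\ref{prop:anytimexi}), exploit the finite horizon and finite alphabet to reduce everything to finitely many any-time-computable conditionals, and run a greedy replanning argument with the standard $\argmax$-of-approximation and finite-horizon error-accumulation bounds. Both identify the same obstacle (the lower semi-computable denominator in the posterior spoils lower semi-computability of $\xi$); the paper's trick is slicker, reuses the DTS machinery unchanged, and the $\xi^{\sm}$ device is reused again in the proof of Theorem~\ref{thm:novel}, whereas your argument is more self-contained and does not require the proportionality-of-values observation. One caveat: your closing claim that the DTS/uniform-policy machinery of Propositions~\ref{prop:lscvalue} and~\ref{prop:opt} ``goes through verbatim'' with the any-time estimator substituted is not right as stated --- that machinery accumulates contributions monotonically and its correctness leans on the estimates never decreasing, which fails for a non-monotone estimator. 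But your proof does not actually need DTS: the finite enumeration of the $|\X|^{2(m-t+1)}$ extensions in Equation~\ref{eqn:optvalue} already does the work, so this is a stray overreach rather than a gap.
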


\begin{proof}
    This does not follow immediately from the previous result because $\xi(o_t | a_{\leq t}o_{<t})$ is not, in general, lower semi-computable. $w_o(\nu | a_{\leq t}o_{<t})$ is the quotient of two lower semi-computable values: $\prod_{k < t} \nu(o_k | a_{\leq k}o_{<k})$ is the numerator, and the denominator is the sum over all $\nu$ of such terms.

    However, an unnormalized value function has the same optimum as the value function itself. Let $\xi^{\sm}(o_t | a_{\leq t}o_{<t}) = \sum_{\nu \in \M} w_o(\nu) \left[\prod_{k < t} \nu(o_k | a_{\leq k}o_{<k}) \right] \nu(o_t | a_{\leq t}o_{<t})$. The sum of these ``probabilities'' will typically not come close to 1, but they are proportional to those of $\xi$, so $V^\pi_{\xi, U_m}(x_{<t}) > V^{\pi'}_{\xi, U_m}(x_{<t})$ if and only if $V^\pi_{\xi^{\sm}, U_m}(x_{<t}) > V^{\pi'}_{\xi^{\sm}, U_m}(x_{<t})$. Finally, observe that $\xi^{\sm}$ is lower semi-computable because it is a product of lower semi-computable terms, so by Proposition \ref{prop:opt}, a $\xi^{\sm}$-optimizer exists, which is also a $\xi$-optimizer.
\end{proof}


Now we define a $\kl$-regularized optimizer. First, let $\pi(a_{k:m} | x_{<2k}o_{k:m}) := \prod_{t = k}^m \pi(a_t | x_{<2k}a_k o_k ... a_{t-1}o_{t-1})$. (So note that $a_t$ is not in fact conditioned on $o_{t+1}$.)
\begin{definition}[$\kl$-regularized optimizer]
    For any lower semi-computable semi-distributions $\nu$ and $\rho$, a horizon $m$, a utility function $U_m$, a starting string $x_{<2k}$, and a tolerance $\delta$, a $\kl$-regularized optimizer is an any-time program $\pi^\delta_c$ for computing actions (perhaps stochastically) for which the following holds. First,
    \begin{equation}
        \delta > \max_{o_{k:m} \in \X^{m-k+1}} \sum_{a_{k:m} \in \X^{m-k+1}} \pi^\delta_c(a_{k:m} | x_{<2k}o_{k:m}) \log \frac{\pi^\delta_c(a_{k:m} | x_{<2k}o_{k:m})}{\rho(a_{k:m} | x_{<2k}o_{k:m})} =: \kl_{x_{<2k}, m}(\pi^\delta_c||\rho)
    \end{equation}
    and second, $V^{\pi^\delta_c}_\nu$ approaches the optimal value subject to that constraint, as $c \to \infty$.
\end{definition}


\begin{proposition}[$\kl$-regularized optimizers exist] \label{prop:klexist}
    For any lower semi-computable semi-distributions $\nu$ and $\rho$, any $m$, any computable utility function $U_m$, any starting string $x_{<2k}$, and any tolerance $\delta \geq 0$, there exists a $\kl$-regularized optimizer.
\end{proposition}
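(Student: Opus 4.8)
The plan is to reduce the statement to a constrained finite-dimensional optimization and then solve it with a dovetailed search that exploits the \emph{one-sided} computability of the two quantities involved. Because the horizon $m$ and the alphabet $\X$ are finite, a policy on the relevant histories is specified by finitely many sub-distributions over $\X$ (one for each reachable history of length up to $2m$), so the set of admissible policies is a compact convex subset of a finite-dimensional Euclidean space in which rational-valued policies are dense. On this space $V^\pi_{\nu,U_m}$ is a multilinear, hence continuous, function of the policy parameters, while $\kl_{x_{<2k},m}(\pi\|\rho)$ is continuous on the stratum where it is finite (where $\pi$ places mass only where $\rho$ does) and jumps to $+\infty$ off it. I would first record that for any $\delta > 0$ the feasible set $\{\pi : \kl_{x_{<2k},m}(\pi\|\rho) < \delta\}$ is nonempty, since the all-zero ``refuse to act'' sub-distribution has $\kl = 0$ (using $0\log 0 = 0$); thus the constrained optimum $V^\star := \sup\{V^\pi_{\nu,U_m} : \kl_{x_{<2k},m}(\pi\|\rho) < \delta\}$ is well-defined.

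The key observation is that the two quantities are computable from opposite sides, and these are exactly the directions we need. For a fixed rational policy $\pi$, Proposition \ref{prop:lscvalue} (run its DTS construction with $\pi$ the given rational policy) yields an any-time routine producing lower bounds $\underline V_c(\pi) \uparrow V^\pi_{\nu,U_m}$. Conversely, since $\rho$ is lower semi-computable, $\log\prod_t\rho(a_t|\cdot)$ is lower semi-computable, so $-\log\prod_t\rho$ is upper semi-computable; because the $\pi$-weights are nonnegative and the remaining $\pi\log\pi$ terms are exactly computable for rational $\pi$, the finite sum-and-max defining $\kl_{x_{<2k},m}(\pi\|\rho)$ is upper semi-computable, giving $\overline{\kl}_c(\pi) \downarrow \kl_{x_{<2k},m}(\pi\|\rho)$. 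I would then define $\pi^\delta_c$ thus: enumerate the rational policies $\pi^{(1)},\pi^{(2)},\dots$; at budget $c$, compute $\overline{\kl}_c(\pi^{(j)})$ and $\underline V_c(\pi^{(j)})$ for $j \le c$, retain those $j$ with $\overline{\kl}_c(\pi^{(j)}) < \delta$ (the zero policy guarantees this set is nonempty), select $j^\star$ maximizing $\underline V_c(\pi^{(j)})$ among the retained indices, and act by sampling the current action from $\pi^{(j^\star)}$.

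Feasibility is then immediate and is precisely why the bounds are taken in these directions: if $\overline{\kl}_c(\pi^{(j^\star)}) < \delta$ then $\kl_{x_{<2k},m}(\pi^{(j^\star)}\|\rho) \le \overline{\kl}_c(\pi^{(j^\star)}) < \delta$, so the first clause of the definition holds for every $c$, and policies with infinite KL (those placing mass where $\rho$ vanishes) are automatically and permanently excluded without the algorithm ever deciding the support of $\rho$. For convergence, given $\eps > 0$ I would pick a feasible $\pi'$ with $V^{\pi'}_{\nu,U_m} > V^\star - \eps$ and perturb it to a rational $\pi''$ in the same finite-KL stratum (keep its zero coordinates exactly zero, round the rest), so that by continuity of $V$ and of $\kl$ on that stratum $\pi''$ stays feasible and $V^{\pi''}_{\nu,U_m} > V^\star - 2\eps$. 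For all large $c$, $\pi''$ is enumerated, $\overline{\kl}_c(\pi'') < \delta$, and $\underline V_c(\pi'') > V^{\pi''}_{\nu,U_m} - \eps$, so the selected policy satisfies $V^{\pi^\delta_c}_{\nu,U_m} \ge \underline V_c(\pi^\delta_c) \ge \underline V_c(\pi'') > V^\star - 3\eps$; together with $V^{\pi^\delta_c}_{\nu,U_m} \le V^\star$ (feasibility), this gives $V^{\pi^\delta_c}_{\nu,U_m} \to V^\star$.

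The step I expect to be the main obstacle is making this convergence argument rigorous in the presence of the discontinuity of $\kl$: the target supremum is over the \emph{open} feasible set, the maximizer may sit on the boundary $\{\kl = \delta\}$ and must be approached from strictly inside, and the approximating rational policies must be kept on the finite-KL stratum so that their upper KL-estimates genuinely fall below $\delta$ rather than hovering at $+\infty$. The degenerate case $\delta = 0$ is separate: the strict constraint $0 > \kl$ is then unsatisfiable (as $\kl \ge 0$) and the feasible set is empty, so the statement holds only vacuously; I would either exclude it or note that no nontrivial optimizer is demanded.
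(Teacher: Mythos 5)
Your proposal is correct and follows essentially the same route as the paper's proof: establish that $\kl_{x_{<2k},m}(\pi\|\rho)$ is upper semi-computable (via lower semi-computability of $\rho$ and monotonicity), that $V^\pi_{\nu,U_m}$ is lower semi-computable (via Proposition~\ref{prop:lscvalue}), then dovetail over a countable policy class, retain candidates once their KL upper estimate falls below $\delta$, and output the retained candidate with the best value lower estimate. Your explicit restriction to rational policies with a density/continuity argument, and your flagging of the $\delta=0$ edge case, are minor refinements of details the paper leaves implicit, not a different approach.
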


\begin{proof}
    First, we show that for any computable probability distribution $\pi$, and any lower semi-computable semi-distribution $\rho$, $\kl_{x_{<2k}, m}(\pi || \rho)$ is upper semi-computable, and therefore the set of probability distributions $\pi$ which have bounded $\kl$ divergence from $\rho$ is computably enumerable.

    Omitting the $x_{<2k}$ and the $o_{k:m}$ that all distributions are conditioned on, note that $\kl(\pi || \rho)$, which equals $\sum_{z \in \X^{m-k+1}} \pi(z) \log \frac{\pi(z)}{\rho(z)}$, is monotonically decreasing in $\rho(z)$ for any $z$. Since $\pi(z)$ is computable, and since $\rho(z)$ is lower semi-computable, then $\pi(z) \log \frac{\pi(z)}{\rho(z)}$ is upper semi-computable.

    By dovetailing (repeatedly switching between ongoing computations, executing one step at a time) the computation over all possible $\pi$ (countably many), we can admit any semi-distribution $\pi$ to a list of viable candidates whenever the estimate of the $\kl$-divergence from $\rho$ falls below $\delta$. Since the $\kl$ estimates never increase, once a semi-distribution $\pi$ is added to the list, it need never be removed. And every viable policy will eventually be added to the list because the $\kl$ estimates approach the truth in the limit of infinite computation, and $[0, \delta)$ is open on the right.

    Dovetailing over all semi-distributions $\pi$ on the list of viable candidates (and adding in the new ones as they get added to the list), we simultaneously update estimates of the value of each one in the given environment $\nu$, recalling that $V^\pi_{\nu, U_m}$ is lower semi-computable (Proposition \ref{prop:lscvalue}). When the computation budget of the any-time optimizer is reached, it samples an action from its estimate of the semi-distribution $\pi$ which is (so far) estimated to be of highest value. (It will need to have a running estimate of the semi-distribution $\pi$ in order to estimate its value).
\end{proof}

\section{Regularizing to an Approximate Solomonoff Inductor}

Let $\xi$ be the Solomonoff Bayes mixture probability semi-distribution defined in Section \ref{sec:solom}. $\xi$ is not computable, but we can do KL regularization to an approximation of $\xi$. Let $\hat{\xi}_i$ be a semi-distribution and a computable estimate of $\xi$, with $\lim_{i \to \infty} \hat{\xi}_i = \xi$. (The existence of this is established by Proposition \ref{prop:anytimexi}). $\hat{\xi}_i$ can be used as the base predictive model (taking the place of $\rho$ in the definition of $\kl$-regularized optimizers). We fix $U_m$ to an arbitrary utility function for the remainder of this work, and drop it from the notation. For a given $\delta$ and a given $i$, let $\pi^\delta_{i, c}$ be the $\kl$-regularized optimizer using $\hat{\xi}_i$ for the $\kl$ constraint, and using $\xi$ to optimize with respect to (taking the place of $\nu$ from the definition). Let this policy approach the optimal value, subject to the constraint, as $c \to \infty$; the existence of $\pi^\delta_{i, c}$ is established by Proposition \ref{prop:klexist}. When this policy is conditioned on $x_{<2t}$ for $t \geq k$, and with $a_{k:t}$ sampled from $\pi^\delta_{i, c}$ itself, we can think of $\pi^\delta_{i, c}$ as an optimizer that is regularized to an approximate Bayesian estimate of a \textit{human policy}, given the origin of $x_{<2k}$.


\section{Behavior in unprecedented circumstances} \label{sec:proofthm}

The following theorem establishes that as $c$ and $i$ go to infinity, the constraint on $\pi^\delta_{i, c}$ becomes quite weak in the presence of unprecedented events.


\thmnovel*

\begin{proof}
    Let $\pi^*_c$ denote an unconstrained optimizer of $U_m$ in the environment $\xi$, which approaches optimality as $c \to \infty$, whose existence is shown by Proposition \ref{prop:xiopt}. As in the proof of Proposition \ref{prop:xiopt}, let $\xi^{\sm}$ be the un-normalized version of $\xi$, which is lower semi-computable: $\xi^{\sm}(o_t | a_{\leq t}o_{<t}) = \sum_{\nu \in \M} w_o(\nu) \left[\prod_{k < t} \nu(o_k | a_{\leq k}o_{<k}) \right] \nu(o_t | a_{\leq t}o_{<t})$. And note that the value according to $\xi$  versus $\xi^{\sm}$ is connected by the normalizing constant: $\xi(x_{<2t}) V^\pi_{\xi, U_m}(x_{<2t}) = V^\pi_{\xi^{\sm}, U_m}(x_{<2t})$. Now, we let $\pi^*_u = \pi^*_c$ where $c$ is set to be the minimal value for which $V^{\pi^*_c}_{\xi^{\sm}, U_m}(x_{<2t})$ exceeds $u$. If $u \geq V^{*}_{\xi^{\sm}, U_m}(x_{<2t})$, then $\pi^*_u$ will not halt, but otherwise, because the value is lower semi-computable, we can increase $c$ until the value reaches at least $u$. Letting $v = u / \xi(x_{<2t})$, observe that $V^{\pi^*_u}_{\xi, U_m}(x_{<2t})$ exceeds $v$, as long as $v < V^{*}_{\xi, U_m}(x_{<2t})$, although it may not be possible to compute $v$ in finite time. So $\pi^*_u$ satisfies the first of the properties promised in the theorem.
    
    We now show that it satisfies the second as well. Recall that $\kl_{x_{<2t}, m}(\pi || \xi)$ only requires evaluating $\xi$ on its predictions for actions, and this takes the form $\xi(a_k | a_{<k} o_{<k}) = \sum_{\nu \in \M} w_a(\nu | a_{<k} o_{<k}) \nu(a_k | a_{<k} o_{<k})$. And it is straightforward to show
    an analogous property for $\xi$'s predictions on longer strings: $\xi(a_{t:m} | a_{<t} o_{<m}) = \sum_{\nu \in \M} w_a(\nu | a_{<t} o_{<t}) \nu(a_{t:m} | a_{<t} o_{<m})$. So we now examine the posterior weights of various models after being conditioned on $a_{<t} o_{<t} \in E$.

    Recall that each $\nu \in \M$ is computed by a corresponding program $s \in \mathcal{L}$. Given the event $E$, the utility function $U_m$, and a target value $u$, we construct, for each $s \in \mathcal{L}$, an $s'_u$ as follows: if, in the input to $s'_u$, $E$ has not happened, execute the program $s$; otherwise compute $\pi^*_u$. Keeping account of the control flow in $s'_u$, we can see there exists a constant $d$ such that $\forall s \ \forall E \ \forall U_m$ and $\forall u$, $s'_u$ has length less than $\ell(s) + K(E) + K(U_m) + K(u) + d$.
    
    Letting $\nu'_u$ be the probability semi-distribution computed by $s'_u$, consider the ratio of prior weights between $\nu$ and $\nu'_u$. Because $w(\nu) = 2^{-\ell(s)}$ for the corresponding program $s$, it follows from the bound on the difference in length between $s$ and $s'_u$ that $w(\nu'_u) / w(\nu) > 2^{-d} 2^{-K(E) - K(U_m) - K(u)}$. The posterior ratio $w(\nu'_u | x_{<2t}) / w(\nu | x_{<2t})$ is the same as the prior ratio, if $E$ happens for the first time at time $t$, because they will have assigned exactly the same probabilities to all characters in $x_{<2t}$. Because the sum over $\nu \in \M$ of the posterior weights must be 1, the sum $\sum_{\nu \in \M} w(\nu'_u | x_{<2t}) > 2^{-d} 2^{-K(E) - K(U_m) - K(u)}$.

    Note by construction that for all $\nu \in \M$, $\nu'_u(a_{t:m} | a_{<t} o_{<m}) = \pi^*_u(a_{t:m} | a_{<t} o_{<m})$. Because all $\nu'_u$ belong to $\M$ for all $\nu \in \M$,
    \begin{align*}
        \xi(a_{t:m} | a_{<t} o_{<m}) &= \sum_{\nu \in \M} w_a(\nu | a_{<t} o_{<t}) \nu(a_{t:m} | a_{<t} o_{<m})
        \\
        &> \sum_{\nu \in \M} w_a(\nu'_u | a_{<t} o_{<t}) \nu'_u(a_{t:m} | a_{<t} o_{<m})
        \\
        &= \left[\sum_{\nu \in \M} w_a(\nu'_u | a_{<t} o_{<t})\right] \pi^*_u(a_{t:m} | a_{<t} o_{<m})
        \\
        &> 2^{-d - K(E) - K(U_m) - K(u)} \pi^*_u(a_{t:m} | a_{<t} o_{<m})
        \tagaligneq
    \end{align*}

    Finally,
    \begin{align*}
        \kl_{x_{<2t}, m}(\pi^*_u || \xi) &= \max_{o_{t:m} \in \X^{m-t+1}} \sum_{a_{t:m}} \pi^*_u(a_{t:m} | a_{<t} o_{<m}) \log \frac{\pi^*_u(a_{t:m} | a_{<t} o_{<m})}{\xi(a_{t:m} | a_{<t} o_{<m})}
        \\
        &< \sum_{a_{t:m}} \pi^*_u(a_{t:m} | a_{<t} o_{<m}) \log 2^{d + K(E) + K(U_m) + K(u)}
        \\
        &= [d + K(E) + K(U_m) + K(u)]/\log 2
    \end{align*}
    and $u = v \xi(x_{<2t})$. Therefore, $\pi^*_u$ satisfies the theorem.
 \end{proof}

What does Theorem \ref{thm:novel} mean for the optimizer constrained by $\kl_{x_{<2k}, m}(\pi || \hat{\xi}_i)$ for large $i$? If the optimization of $U_m$ does not require urgent action, then one valid strategy for a policy $\pi$ is to wait for an unprecedented event, imitating the base policy $\hat{\xi}_i$ until then, and then start optimizing. The telescoping property of the $\kl$ Divergence clarifies the validity of this approach. That is, for $t > k$, $\kl_{x_{<2k}, m}(\pi || \rho) = \kl_{x_{<2k}, t}(\pi || \rho) + \E_{x_{2k:2(t-1)} \sim \pi} \kl_{x_{<2t}, m}(\pi || \rho)$ \citep{Hutter:04uaibook}. So starting with a policy with low $\kl$ divergence from the base policy preserves a ``budget'' for high $\kl$ divergence to be ``spent'' later by switching to a policy with greater divergence from the base policy.

\propscaling*

\begin{proof}
    Consider the very simple event $E_T = \X^T$; it occurs (and is of course unprecedented) at time $T$. $K(E_T)$ is within a constant of $K(T)$. So we are interested in the rate of growth of $\min_{T \geq t} K(T)$ as $t$ increases. \citepos{zvonkin1970complexity} Theorem 1.4 (d) states that this function is eventually less than every computable function that tends to infinity.
\end{proof}

\section{Total variation distance} \label{sec:prooftvd}

\begin{definition}[$V_{\xi, U_m}$-optimal]
    An action $a_t$ is $V_{\xi, U_m}$-optimal after a history $x_{<2t}$ if $\E_{o_t \sim \xi(\cdot | x_{<2t}a_t)} V^*_{\xi, U_m}(x_{<2t}a_to_t) = V^*_{\xi, U_m}(x_{<2t})$.
\end{definition}

\thmtvd*

\begin{proof}
    Letting $\pi(x_{2t:2m} | x_{<2t}) := \prod_{t'=t}^m \pi(a_{t'} | x_{<2t'})$, if $\pi^{TVD}_c(a_t | x_{<2t}) > \beta(a_t | x_{<2t})$, then there exists an $x_{2t+1:2m}$ such that $\pi^{TVD}_c(a_t x_{2t+1:2m} | x_{<2t}) > \beta(a_t x_{2t+1:2m} | x_{<2t})$.
    Suppose $a_t$ is not $V_{\xi, U_m}$-optimal. Then there exists an $a'_t$ such that $Q(x_{<2t}a'_t) > Q(x_{<2t}a_t)$. Let $x'_{2t+1:2m}$ be a sequence where all actions are $V_{\xi, U_m}$-optimal, and all observations have positive probability.
    
    Let $\pi'_\varepsilon(\overline{x}_{2t:2m} | x_{<2t})$ equal $\pi^{TVD}_c(\overline{x}_{2t:2m} | x_{<2t})$ for all $\overline{x}_{2t:2m}$, except $\pi'_\varepsilon(a_tx_{2t+1:2m}|x_{<2t}) = \pi^{TVD}_c(a_tx_{2t+1:2m}|x_{<2t}) - \varepsilon$, and $\pi'_\varepsilon(a'_tx'_{2t+1:2m}|x_{<2t}) = \pi^{TVD}_c(a'_tx'_{2t+1:2m}|x_{<2t}) + \varepsilon$. The conditional probabilities $\pi'_\varepsilon(a_{t'}|x_{<2t'})$ can easily be defined to achieve the properties in the previous sentence.

    For small enough $\varepsilon > 0$, this policy exists (no probabilities are outside [0, 1]) because $\pi^{TVD}_c(a_t | x_{<2t}) > \beta(a_t | x_{<2t}) \geq 0$ and therefore, $\pi^{TVD}_c(a'_t | x_{<2t}) < 1$. And for small enough $\varepsilon > 0$, $\tvd_{x_{<2k}, m} (\pi'_\varepsilon, \beta) \leq \tvd_{x_{<2k}, m} (\pi^{TVD}_c, \beta)$, because decreasing the probability on $a_tx_{2t+1:2m}$ will reduce the total variation distance by $\varepsilon$, for $\varepsilon \leq \pi(a_t x_{2t+1:2m} | x_{<2t}) - \beta(a_t x_{2t+1:2m} | x_{<2t})$ (which is positive), while increasing the probability on $a'_tx'_{2t+1:2m}$ will not increase the total variation distance by more than $\varepsilon$.

    Finally, since $Q(x_{<2t}a'_t) > Q(x_{<2t}a_t)$, $V^{\pi'_\varepsilon}_{\xi, U_m}(x_{<2t}) > V^{\pi^{TVD}_c}_{\xi, U_m}(x_{<2t})$. This contradicts that $\pi^{TVD}_c = \argmax_{\pi : \tvd_{x_{<2k}, m}(\pi, \beta) < c} V^{\pi}_{\xi, U_m}$ since a policy with no more total variation distance has greater value.
\end{proof}

\section{Detailed experimental setup} \label{sec:training}

The details of the experimental setup are as follows.

\subsection{Environment}
The state of the environment, as mentioned in the main text, is the activations of the last three hidden layers of Mixtral-base-model with the transcript-so-far as input, along with the fraction of the episode remaining. This gives a state space of 12289. Using the Mistral tokenizer, the action space is 32000. The environment uses a temperature of 0.05 for generating the student’s responses and a temperature of 1 for the base policy for the agent/teacher.

\subsection{Network Architecture}
The critic network is a fully connected network with two hidden layers of size 128 with tanh activations. The actor network consists of just one parameterized layer, which is fully connected, of size $(|$state space$|, |$action space$| + 1)$. The extra output is for controlling the KL divergence to the base policy. We compute the target KL divergence as sigmoid(activation) * the KL budget remaining to the agent for the episode. So the activation controls what fraction of the remaining KL budget for the episode to use on the very next token. At initialization, this fraction comes to 1/16. The KL budget remaining starts as the total episode KL budget (of course), and is decreased by $\log(\textrm{policy}(\textrm{action}) / \textrm{basepolicy}(\textrm{action}))$ with each action. The other outputs are interpreted as logits and are added to the base policy logits. Calling this resulting distribution $a$, and the base policy distribution $b$, we find an $\alpha \in [0, 1]$ such that $\kl(\alpha a + (1-\alpha) b || b)$ equals the target KL, if possible. If we cannot achieve a sufficiently high KL divergence, we set $\alpha=1$. The output policy is $\alpha a + (1-\alpha) b$. We add any squared error $(\textrm{target KL} - \textrm{achieved KL})^2$ to the loss function to encourage the network to output logits that allow further control by the neuron controlling the KL target.

In the forward pass, our custom PyTorch operation does binary search the calculate $\alpha$ in the interval $[0, 1]$. The backward pass uses implicit differentiation, assuming we have found exactly the right $\alpha$---there is no need to differentiate backward through the binary search, which would be unstable. Code for this PyTorch object can be found at \url{https://github.com/mkc1000/kl-fixed-mixture/}.

\subsection{PPO}
We use the following hyperparameters for PPO. We do not use a generalized advantage estimate.

\begin{table}[h]
    \centering
    \begin{tabular}{cc}
        Training timesteps & 6 million \\
        Update frequency & 1 / 64 episodes\\
        Training epochs / update & 8 \\
        Training batch size & $2^{13}$\\
        Epsilon clip & 0.1 \\
        Entropy coefficient & 1e-4 \\
        Max gradient norm & 0.1 \\
        Actor learning rate & 2e-5 \\
        Critic learning rate & 1e-4 \\
    \end{tabular}
    \label{tab:hyper}
\end{table}

A higher entropy coefficient is unnecessary given the KL constraint to the base policy. Over the first 3 million timesteps of training, we slowly increase the per-episode KL budget from 0 to its final value. We increase this at a linear schedule each time we update the network.

When we re-train for a longer episode length (256 tokens to 512 tokens), we train for 3 million steps, plenty to reach apparent convergence.

\subsection{Parallelism}
We use threading to run 64 agent-environment-loops in ``parallel''. When we would need to send a transcript of length $l$ to be processed by the Mixtral model, we wait until all 64 agent-environment-loops need to send a transcript of length $l$, and then they are batched and evaluated together in parallel on the GPU. The result might needed by either the agent or the environment, and we use the python \texttt{asyncio} library to manage this. Doing just that step in parallel is enough for substantial speedup.

\subsection{Resource Usage}
We ran our experiments on two A100-SXM4-80GBs. Training for 9 million timesteps took approximately 90 hours. Our seven training runs (one of which was stopped after 6 million timesteps) took about 25 days, all told. (We ran the experiments two or three at a time). The full research project required much more compute, since finding good hyperparameters for PPO is never straightforward, especially when we were attempting to achieve a desired per-episode KL divergence, only with the use of a fixed per-token KL cost; recall that we eventually switched to a policy architecture that allowed direct control of the per-episode KL divergence.

\end{document}